\newtheorem{proposition}{Proposition}
\newcommand{\changed}[1]{{#1}}
\title{Hankel Singular Value Regularization for Highly Compressible State Space Models}
\author{%
  Paul Schwerdtner \\
  Courant Institute of Mathematical Sciences\\
  New York University\\
  New York, NY 10012 \\
  \texttt{paul.schwerdtner@nyu.edu} \\
  \And
  Jules Berman \\
  Courant Institute of Mathematical Sciences\\
  New York University\\
  New York, NY 10012 \\
  \texttt{jmb1174@nyu.edu} \\
  \And
  Benjamin Peherstorfer\\
  Courant Institute of Mathematical Sciences\\
  New York University\\
  New York, NY 10012 \\
  \texttt{pehersto@cims.nyu.edu} \\
}
\begin{document}

\maketitle

\begin{abstract}
Deep neural networks using state space models as layers are well suited for long-range sequence tasks but can be challenging to compress after training.
We use that regularizing the sum of Hankel singular values of state space models leads to a fast decay of these singular values and thus to compressible models. To make the proposed Hankel singular value regularization scalable, we develop an algorithm to efficiently compute the Hankel singular values during training iterations by exploiting the specific block-diagonal structure of the system matrices that we use in our state space model parametrization. Experiments on Long Range Arena benchmarks demonstrate that the regularized state space layers are up to 10$\times$ more compressible than standard state space layers while maintaining high accuracy. 
\end{abstract}

\begin{figure}[b]\includegraphics[width=\textwidth]{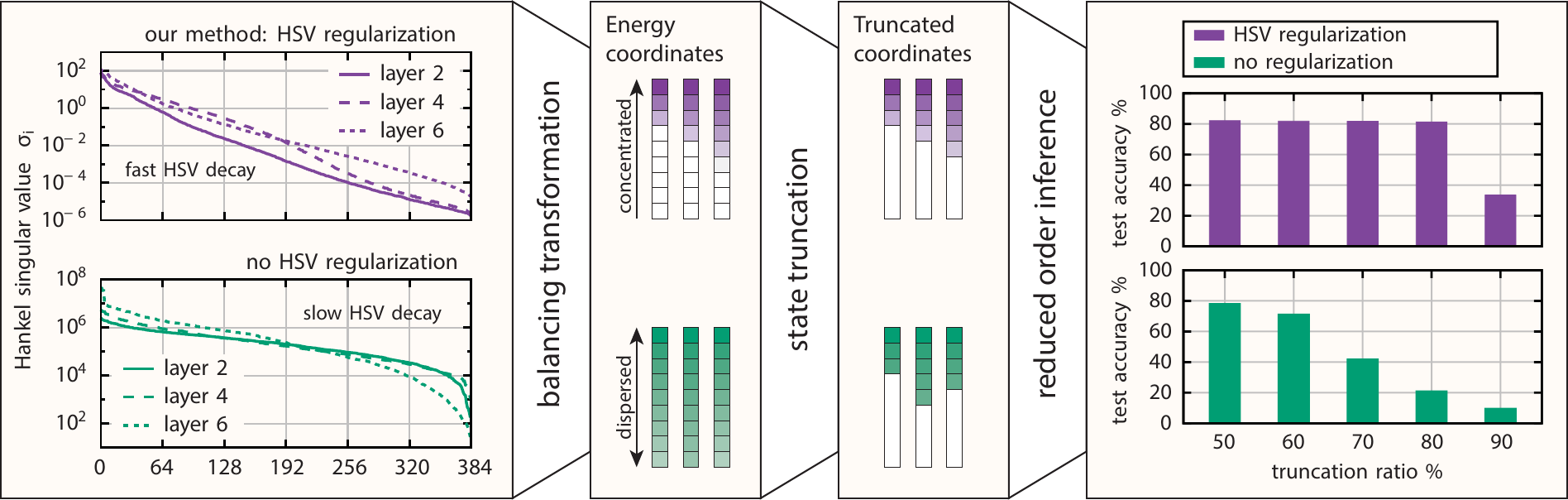}
\caption{We propose to regularize the Hankel singular values of SSMs so that they become compressible.
\textbf{Left}: Regularizing with the Hankel singular values during training leads to SSMs with a fast Hankel singular value (HSV) decay.
\textbf{Middle}: SSMs with a fast HSV decay have many low-energy states that only contribute little to the layer output. 
\textbf{Right}: Compressing the SSM by truncating only such low-energy states changes the corresponding sequence-to-sequence map insignificantly and retains the overall accuracy. %
Without our regularization, HSVs decay slowly and compression leads to an accuracy deterioration.
}
\label{fig:Overview}
\end{figure}

\section{Introduction}

\subsection{Compressing state space models}
As deep neural networks (DNNs) get bigger, compression via quantization \cite{Han2015DeepCC,jacob2018quantization}, pruning \cite{zhu2017prune,lee2020layer,xu2023efficient}, and distillation~\cite{hinton2015distillingknowledgeneuralnetwork,muralidharan2024compact} becomes ever more important \cite{8253600,10.1162/tacl_a_00704,Gou2021}. 
In this work, we focus on compressing neural networks for long-sequence modeling.  
In particular, we focus on DNNs with state space models (SSMs) as layers~\cite{GuD2023MAMBA,GuDERR2020Hippo,GuJGSDRR2021Combining,FuDSTRR2022Hungry}, which have been shown to achieve state-of-the-art accuracy while reducing training and inference costs compared to transformer models~\cite{GuD2023MAMBA,SmithWL2023Simplified,TayDASBPRYRM2020Long}. 
Significant progress was necessary to make SSM layers competitive for long-sequence tasks~\cite{GuDERR2020Hippo,FuDSTRR2022Hungry}. After the initial success of S4 layers~\cite{GuDERR2020Hippo}, a big step forward were S5 layers~\cite{SmithWL2023Simplified}, which contain linear time-invariant systems that have diagonal system matrices and reach state of the art performance while balancing expressivity with training and inference costs.
Additionally, initialization of SSMs has been identified as being critical for training success, for which HiPPO matrices are now commonly used~\cite{GuDERR2020Hippo}. 

In this work, we build on DNNs with SSM layers that are well suited for long-sequence tasks  and aim to further reduce inference costs by training neural networks such that the SSM layers are highly compressible:
The critical quantity in SSMs that controls inference costs is the order $n$, which is the dimension of the internal state~\cite{GwakMKP2024Layer-Adaptive}. 
Therefore, instead of training DNNs with general SSMs as layers with order $n$, we regularize the training to determine SSMs of order $n$ but such that at the same time SSMs with lower order $r \ll n$ exist that mimic the sequence-to-sequence map of the individual SSM layers. Thus, in our approach, training is performed with a larger order $n$ to provide the opportunity for exploring a large parameter space for high expressivity while enabling compression as a post-processing step. 
Our proposed regularization is founded on well-established and rigorous system-theoretic results~\cite{Antoulas2005Approximation}.

\subsection{System-theoretic perspective on SSM compressibility}
A classical system-theoretic way \cite{Antoulas2005Approximation} of describing a sequence-to-sequence map $\{\bfu_k\}_{k=0}^{\infty} \mapsto \{\bfy_k\}_{k=0}^{\infty}$ induced by a linear time-invariant dynamical system (as used in S5~\cite{SmithWL2023Simplified}) is via the convolution with the impulse response $\bfh_k \in \mathbb{R}^{p \times m}$, 
\[
\bfy_k = \sum\nolimits_{i = 0}^k \bfh_{k-i}\bfu_i\,,\qquad k = 0, 1, 2, 3, \dots\,,
\]
where the input $\bfu_k \in \mathbb{R}^m$ and output $\bfy_k \in \mathbb{R}^p$ at time step $k$ are of dimension $m$ and $p$, respectively. \changed{Note that the common design choice in deep state space models is to set $m=p$.}
This convolution leads to the Hankel operator $\mathcal{H}: \ell_2^m \to \ell_2^p$ that acts on sequences and maps $\{\bfu_k\}_{k=0}^{\infty}$ to $\{\bfy_k\}_{k=0}^{\infty}$. The Hankel operator can be explicitly described by the blocks $\mathcal{H}_{ij} = \bfh_{k - i -j}$ for $i, j \in 0, 1, 2, 3, \dots$ and is linear and bounded. Importantly, the number of non-zero singular values of a Hankel operator is finite and gives the McMillan degree, which is the minimal order $n$ necessary for an SSM to describe the map $\{\bfu_k\}_{k=0}^{\infty} \mapsto \{\bfy_k\}_{k=0}^{\infty}$ \cite{Antoulas2005Approximation}. 
Analogous definitions hold for finite-length sequences $\seqlen < \infty$, in which case the finite Hankel operator is obtained by $\bfH_{ij} = \bfh_{k - i - j}$ for $i, j = 0, \dots, \seqlen$. While clearly a system of order $\seqlen$ exists to describe the map $\{\bfu_k\}_{k=0}^{\seqlen} \mapsto \{\bfy_k\}_{k=0}^{\seqlen}$, the goal is finding a system with order $n \ll \seqlen$ that achieves the same mapping or at least a good approximation of it. 

The key for compressibility are the singular values of the Hankel operator: if $n$ is the number of non-zero singular values $\sigma_1, \dots, \sigma_n > 0$, then there exists a system of order $r \leq n$ that maps $\{\bfu_k\}_{k = 0}^{\seqlen} \mapsto \{\hat{\bfy}_k\}_{k  = 0}^{\seqlen}$ with error $\|\hat{\bfy}_k - \bfy_k\|_{\ell_2} \leq 2 \|\bfu\|_{\ell_2} \sum_{i = r + 1}^n \sigma_i$.  
Thus, the quicker the singular values of the Hankel operator decay, the more compressible a system is, which is relevant for, e.g., model reduction \cite{Rozza2008,doi:10.1137/130932715,annurev:/content/journals/10.1146/annurev-fluid-121021-025220}. %

When training SSM layers with standard optimization, however, one only prescribes an order $n$, and then the optimizer can distribute the Hankel singular values at will, which typically leads to systems that are only poorly compressible. %
In Figure~\ref{fig:Overview}, we show the decay of the Hankel singular values of the systems learned for the Long Range Arena (LRA)~\cite{TayDASBPRYRM2020Long} image benchmark.
The Hankel singular values decay slowly and thus there cannot exist a much smaller system with $r \ll n$ that achieves a good approximation of the sequence-to-sequence map of the original state space models of order $n$. 
The results in Figure~\ref{fig:Overview} are in agreement with other attempts of compressing SSMs such as \cite{EzoeS2024Model}, which show that on standard LRA benchmarks the Hankel singular values of the SSM layers also decay slowly and thus the systems are not well compressible when trained with default procedures. %

\subsection{Literature review}\label{sec:LitReview}
\paragraph{Models with state space layers} DNNs with state space models as layers have recently been made popular starting with the introduction and utilization of the high-order polynomial projection operators (HiPPO) framework in~\cite{GuDERR2020Hippo}, which was applied as \emph{structured state space sequence model} (S4) to efficiently model long sequences in~\cite{GuGR2022Efficiently} and outperformed the state of the art in the Long Range Arena (LRA) benchmark~\cite{TayDASBPRYRM2020Long}.
This was a major step forward as other architectures such as recurrent neural networks (RNNs)~\cite{ArjovskySB2016Unitary,ErichsonAQHM2020Lipschitz,RuschM2021Unicornn,ChangCHC2019AntisymmetricRNN} and transformers~\cite{VaswaniSPUJGKP2023Attention} and memory efficient variants of transformer layers~\cite{ChoromanskiLDSGSHDMK2020Rethinking,KatharopoulosVPF2020Transformers,KitaevKL2020Reformer,BeltagyPC2020Longformer,GuptaB2020Gmat,WangLKFM2020Linformer} achieved poorer performance at the time on long-range sequence tasks as in~\cite{TayDASBPRYRM2020Long}.
After the initial introduction of S4, a simplified SSM layer (S5) was introduced in~\cite{SmithWL2023Simplified}, in which S4 was streamlined from its original single-input single-output (SISO) and convolutional formulation to a multi-input multi-output (MIMO) time-domain formulation. The time-domain formulation, which leverages a parallel scan for computational efficiency, was an important contribution as it facilitates more variations such as time-varying and input-dependent SSM operators, which are explored independently in \emph{Liquid-S4}~\cite{HasaniLWCAR2022Liquid}.

Large scale SSMs are explored in Mamba-S6~\cite{GuD2023MAMBA}, which introduces selective SSMs that allow for efficient filtering and context compression, and H3~\cite{FuDSTRR2022Hungry}, which is extended further in~\cite{PoliMNFDBBER2023Hyena}.
The new class of structured SSMs has found a wide range of applications such as audio generation~\cite{GoelGDR2022Its} and vision tasks~\cite{NguyenGGDSDBR2022S4nd,IslamB2022Long}.

Even methodological contributions to the layer architecture are still made such as a recent state-free transfer-function-based implementation~\cite{ParnichkunMMSHLARAE2024State-free}, \changed{novel parameterization schemes~\cite{HOPEPaper}}, a reformulation of RNNs in the SSM framework~\cite{orvieto2023resurrecting}, or bidirectional extensions of Mamba~\cite{HwangLPDG2024Hydra}.

\paragraph{Compression and distillation of SSMs}
Given that costs and the number of parameters grow with $n$, the order of the SSM, there has been investigations about how to keep $n$ small without sacrificing expressivity. While pruning and post-training compression is extensively studied for transformer architectures~\cite{muralidharan2024compact,bercovich2024puzzle,tang2025darwinlm}, model compression for SSM and mixed models has only recently started to gain traction~\cite{ghattas2025pruning,munoz2025mamba,taghibakhshi2025efficient}.
\changed{The work \cite{GwakMKP2024Layer-Adaptive} develops system and control-inspired criteria for state-pruning over multiple layers. In contrast, we propose to regularize the Hankel singular values already during training, which is an approach from classical system and control theory for system identification \cite{PILLONETTO2016137,sun2022identificationnuclearnormregularization,doi:10.1137/23M1554308} and has been pursued in the context of deep networks with state space model layers in~\cite{Forgione2024}. Our work combines the ideas of~\cite{Forgione2024} and \cite{GwakMKP2024Layer-Adaptive}: we leverage Hankel singular value regularization as in~\cite{Forgione2024} and multi-layer state pruning as in~\cite{GwakMKP2024Layer-Adaptive}. Combining Hankel singular value regularization with multi-layer state pruning enables layer-dependent rank adaptation, which is important when networks get deeper as necessary for the benchmarks we consider. In contrast, the work \cite{Forgione2024} uses the same rank for all layers. Moreover, we consider a block-structured real-valued parametrization of stable systems, for which we propose novel algorithmic methods for the efficient Hankel singular value computation compared to the diagonal matrices and corresponding algorithms considered in \cite{Forgione2024}. Combining the layer-adaptive rank, the block-diagonal structure, and our novel algorithms for computing singular values allows us to  demonstrate scalability on Long Range Arena benchmarks, thereby highlighting relevance to language modeling tasks beyond the physics problems considered in~\cite{Forgione2024}.
}
Directly applying general model reduction methods to SSMs is attempted in~\cite{EzoeS2024Model} but the method in~\cite{EzoeS2024Model} requires retraining a smaller model after the reduction, which can be a costly step that our approach avoids. Additionally, there is distillation that aims to learn a smaller model from a larger teacher model; see, e.g., \cite{hinton2015distillingknowledgeneuralnetwork}. Distillation in the context of SSMs is explored in~\cite{muralidharan2024compact}.

\subsection{Our approach and summary of contributions}
We propose a training procedure that allows efficiently regularizing the Hankel singular values of the SSM layers so that the singular values decay favorably for compression. Our contributions are: 
\bp Connecting SSM layers to system theory for deriving conditions under which layers are compressible. %
\bp Regularizing Hankel singular values to nudge the optimizer to seek models that can be well compressed with standard tools from system theory such as balanced truncation, while maintaining accurate sequence-to-sequence mappings. \changed{To this end, we prove the differentiability of a nuclear norm regularizer constructed from the Hankel singular values.} 
\bp For efficient training, developing an algorithm that enables the computation of so-called gramians, which are needed to evaluate the Hankel singular values of SSMs, with our imposed block structure, that reduces the computational cost from $\mathcal{O}(n^3)$ to $\mathcal{O}(n^2)$ in the state dimension $n$.
\bp Demonstrating that applying our method can lead to up to a $10\times$ improvement in accuracy for strongly compressed models on Long Range Arena benchmarks.

\section{Hankel singular value regularization (HSVR)}

We propose to regularize the training of neural-network models with SSM layers so that the SSMs have a fast Hankel singular value decay which means they can be compressed efficiently.
The key to regularizing the Hankel singular values is ensuring that the regularizer building on them is differentiable and that computing the singular values is efficient during the training iterations, for which we introduce a scalable parametrization and a scalable algorithm.

\subsection{Parametrization of SSMs for scalable computation of Hankel singular values}

\paragraph{Time-discrete SSMs and their Hankel singular values}
\label{sec:param}
Each SSM layer consists of a linear time-invariant dynamical system,
\begin{align}
    \begin{split}
    \label{eq:discrete_system}
        \bfx_{k+1} &= \bfA \bfx_k + \bfB\bfu_k, \quad \bfx_0 = \boldsymbol{0} \in \R^{n}, \\
        \bfy_k &= \bfC \bfx_k + \bfD \bfu_k,
    \end{split}
\end{align}
where $\bfA\in \R^{n\times n}, \bfB\in\R^{n\times p}, \bfC\in\R^{p\times n}$, and $\bfD\in\R^{p\times p}$, are the system, input, output, and feedthrough matrices, respectively, and $\{\bfu_k\}_{k=0}^{\seqlen}$ and $\{\bfy_k\}_{k=0}^{\seqlen}$ are the input signal and the system output, respectively. 
The controllability gramian $\bfP$ and observability gramian $\bfQ$ of size $n \times n$ of the system \eqref{eq:discrete_system} carry information about the singular values of the underlying Hankel operator $\mathcal{H}$: For stable, controllable, and observable systems (terms defined in the Appendix~\ref{sec:appdx:proofs}), the non-zero Hankel singular values $\sigma_1, \sigma_2, \dots$ of the system \eqref{eq:discrete_system} are the square-roots of the eigenvalues of the product $\bfP\bfQ$, $\sigma_i(\mathcal{H})=\sqrt{\lambda_i(\bfP\bfQ)}$, where $\lambda_i$ denotes the $i$-th eigenvalue of its matrix argument. 
The gramians can be computed as the solutions to the discrete Lyapunov equations
\begin{align}
  \label{eq:ctl_lyap}
  \bfA\bfP\bfA^\top - \bfP + \bfB\bfB^\top = \bf0, \\
  \label{eq:obs_lyap}
  \bfA^\top \bfQ\bfA - \bfQ + \bfC^\top \bfC = \bf0.
\end{align}
A straightforward solution to these Lyapunov equations can be computed by vectorizing both equations, which yields
\begin{align}
    \label{eq:lyap_simple_solver}
    \vectorize{(\bfP)} &= -\inv{(\bfA \otimes \bfA - \bfI)}\vectorize{(\bfB \bfB^\top)}, \\
    \vectorize{(\bfQ)} &= -\inv{(\bfA^\top \otimes \bfA^\top - \bfI)}\vectorize{(\bfC^\top \bfC)},
\end{align}
where $\vectorize$ denotes the vectorization operator that stacks all columns of its matrix argument. 
However, computing $\bfP$ and $\bfQ$ using~\eqref{eq:lyap_simple_solver} scales as $\mathcal{O}(n^6)$. The Bartels-Steward algorithm~\cite{BartelsS1972Solution} can be used to solve general Lyapunov equations in $\mathcal{O}(n^3)$ using generalized eigen-decompositions.

\paragraph{Parametrization via rotation matrices}
We now parametrize discrete dynamical systems~\eqref{eq:discrete_system} via scaled rotation matrices. Scaled rotation matrices lead to favorable properties such as making it easy to enforce stability. Furthermore, the rotation block structure allows us to derive algorithms to compute solutions of the Lyapunov equations \eqref{eq:ctl_lyap}--\eqref{eq:obs_lyap} with costs that scale as $\mathcal{O}(n^2)$ (instead of $\mathcal{O}(n^3)$ as in the standard Bartels-Steward algorithm). 
Note that we are not claiming we are the first to recognize the benefits of parametrizing SSMs with rotation matrices as a discrete-time alternative to the HiPPO framework (see, e.g., \cite{ROTNN}) but we show how they are useful for efficiently computing Hankel singular values for regularization. %

In the following, we set $q = n/2$ and only consider the case $m=p$ to ease the notation burden. At layer $\ell$, we have a system of the form \eqref{eq:discrete_system} with matrices 
\begin{align}\label{eq:RotParam}
  \rotA^{(\ell)}  = \begin{bmatrix}
    \rotA_1(\rho_1^{(\ell)}, \alpha_1^{(\ell)}) & 0 & \cdots & 0 \\
    0 & \rotA_2(\rho_2^{(\ell)}, \alpha_2^{(\ell)}) & \cdots & 0 \\
    \vdots & \vdots & \ddots & \vdots \\
    0 & 0 & \cdots & \rotA_q(\rho{q}^{(\ell)}, \alpha_{q}^{(\ell)}) \\
  \end{bmatrix}, \quad
  \rotB^{(\ell)} = \begin{bmatrix}
    \bfe_1 & \rotB_1^{(\ell)} \\ \bfe_1 & \rotB_2^{(\ell)} \\ \vdots  & \vdots \\ \bfe_1 & \rotB_q^{(\ell)}
  \end{bmatrix},
\end{align}
where $\bfe_1 = [1, 0]^{\top}$ and 
each block in $\bfA^{(\ell)}$ is a rotation matrix  
\begin{align}
\label{eq:ourparam}
  \rotA_i(\rho_i^{(\ell)}, \alpha_i^{(\ell)}) = \retention_i^{(\ell)}\begin{bmatrix}
     \cos(\myangle_i^{(\ell)}) & \sin(\myangle_i^{(\ell)}) \\
     -\sin(\myangle_i^{(\ell)}) & \cos(\myangle_i^{(\ell)}) \\
  \end{bmatrix} \in \mathbb{R}^{2 \times 2}, \quad
  \rotB_i^{(\ell)} = \begin{bmatrix}
    \bfb_{i1}^{(\ell)} \\
    \bfb_{i2}^{(\ell)} 
  \end{bmatrix}\in \mathbb{R}^{2 \times (p-1)}\,,
\end{align}
\changed{which has the complex conjugate eigenvalues 
$\rho_i^{(\ell)}\left(\cos(\alpha_i^{(\ell)})\pm \mathrm{i} \sin(\alpha_i^{(\ell)})\right)$}.
The output matrix $\bfC^{(\ell)}$ of size $p \times n$ has no special structure. In our experiments, we only use diagonal feed-through  matrices $\bfD^{(\ell)}$ of size $p \times p$; which is standard also in other works \cite{SmithWL2023Simplified}.

We can summarize the parameters of the SSM in layer $\ell$ in the parameter vector $\bftheta^{(\ell)} \in \mathbb{R}^{2q + n(p-1) + pn + p}$, which contains in vectorized form $\bfalpha^{(\ell)}, \bfrho^{(\ell)}, \bfb_{11}^{(\ell)}, \dots, \bfb_{q2}^{(\ell)}, \bfC^{(\ell)}, \bfD^{(\ell)}$.
Notice that the number of SSM-parameters per layer, i.e., the dimension of $\bftheta^{(\ell)}$, grows linearly in the order $n$ of the SSM, which is the same as using diagonal system matrices as in the S5 SSM layers~\cite{SmithWL2023Simplified}. 

The parametrization given by rotation matrices is universal in the sense that it is dense in the space of linear time-invariant systems of order $n$, which is shown in the following. Recall that systems with the same impulse response are equivalent in the sense that they describe the same sequence-to-sequence map. %

\begin{proposition}
\label{lem:rotmat_generic}
    For any linear time-invariant system of order $n$ there exists an infinitesimal perturbation such that
    the sequence-to-sequence map $\{\bfu_k\}_{k = 0}^{\infty} \mapsto \{\bfy_k\}_{k  = 0}^{\infty}$ of the perturbed model can be described by an SSM with matrices of the form~\eqref{eq:ourparam} with
    $\bfalpha,\bfrho \in \R^{\tilde n}$, where $\tilde n\le n$.
\end{proposition}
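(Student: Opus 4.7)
The proposition amounts to showing that the parametrization~\eqref{eq:ourparam} is dense in the space of LTI systems of order $n$, where two systems are identified if they share an impulse response. My approach has three steps: (i) perturb to a generic spectrum and generic input coupling, (ii) conjugate $\bfA$ into block-diagonal rotation form via the real Jordan decomposition, (iii) further conjugate by block-diagonal scaled rotations to normalize the first column of $\bfB$.

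\textbf{Step 1 (genericity).} An arbitrarily small perturbation of $(\bfA,\bfB,\bfC,\bfD)$ makes the eigenvalues of $\bfA$ pairwise distinct and, in the eigenbasis of $\bfA$, makes the first column of $\bfB$ have non-zero projection onto every (real or complex) eigenspace (a Popov--Belevitch--Hautus style genericity condition). For a real matrix, the distinct eigenvalues split into $k$ real ones and $m$ complex-conjugate pairs with $k+2m=n$.

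\textbf{Step 2 (rotation form for $\bfA$).} A complex-conjugate pair $a\pm \mathrm{i} b$ with $b\neq 0$ corresponds via the real Jordan decomposition to a block similar to $\bigl[\begin{smallmatrix} a & b\\ -b & a\end{smallmatrix}\bigr]=\rho R(\alpha)$ with $\rho=\sqrt{a^2+b^2}$ and $\alpha\in(0,\pi)$, exactly matching~\eqref{eq:ourparam}. A real eigenvalue $\lambda$ is embedded into a $2\times 2$ block $\lambda I = \rho R(0)$ (or $-\rho R(\pi)$ if $\lambda<0$) by introducing one auxiliary state whose corresponding row of $\bfB$ and column of $\bfC$ are taken to be zero, so that the auxiliary mode is uncontrollable and unobservable and contributes nothing to the impulse response. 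Assembling the blocks yields a real invertible $T$ such that $T^{-1}\bfA T$ is block-diagonal with $\tilde n=m+k$ blocks; since $k+2m=n$ and $k\le n$, this gives $\tilde n=(n+k)/2\le n$. The similarity $(T^{-1}\bfA T, T^{-1}\bfB, \bfC T, \bfD)$ preserves the impulse response, and the free scalars in the reduced block $\bfC$ can be chosen to reproduce the original contribution of each real mode.

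\textbf{Step 3 (normalizing $\bfB$).} A direct computation shows that $2\times 2$ real matrices of the form $\lambda R(\beta)$ form the full centralizer of $\rho_i R(\alpha_i)$ for $\alpha_i\in(0,\pi)$ (and trivially commute with $\rho_i I$ when $\alpha_i\in\{0,\pi\}$). Hence the block-diagonal similarity $S=\mathrm{diag}(\lambda_1 R(\beta_1),\ldots,\lambda_{\tilde n} R(\beta_{\tilde n}))$ leaves the block-diagonal rotation form of $\bfA$ invariant. By the genericity condition of Step 1, the first column of $T^{-1}\bfB$ restricted to each $2\times 2$ block is non-zero; choosing $\beta_i$ to rotate this 2-vector onto $\mathrm{span}(\bfe_1)$ and $\lambda_i$ to normalize its length produces a matrix $S^{-1}T^{-1}\bfB$ whose first column is exactly $(\bfe_1,\ldots,\bfe_1)^\top$ as required by~\eqref{eq:RotParam}. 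The remaining columns of $\bfB$, together with $\bfC$ and $\bfD$, are unconstrained in the parametrization and can absorb the similarity freely.

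\textbf{Expected obstacle.} The delicate part is Step 2: a real eigenvalue cannot appear in isolation inside a scaled rotation block, whose spectrum is a conjugate pair of equal modulus. The workaround pads each real mode with an uncontrollable-and-unobservable companion state, which costs one extra state per real eigenvalue and is precisely what forces the bound $\tilde n\le n$ rather than the tighter $\tilde n=n/2$ available when $\bfA$ has no real eigenvalues. Verifying that the padded blocks can consistently be made invisible to the impulse response (so that the overall sequence-to-sequence map is unchanged) while still obeying the rigid first-column pattern in~\eqref{eq:RotParam} is what the genericity conditions in Step 1 are designed to guarantee.
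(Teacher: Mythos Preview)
Your proof is correct and follows essentially the same route as the paper's: perturb to a generic (diagonalizable, suitably controllable) system, bring $\bfA$ to real block-diagonal rotation form, normalize the first column of $\bfB$ via a block-diagonal scaled-rotation similarity (the paper phrases this as scaled Givens rotations), and pad each real eigenvalue with an uncontrollable/unobservable companion state. The only cosmetic differences are the order of padding versus normalizing and your more precise genericity hypothesis on the first column of $\bfB$, which is exactly the condition needed and in fact sharper than the paper's appeal to full controllability.
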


Our proof, which we present in the Appendix~\ref{sec:appdx:RotDenseProof}, uses~\cite[Chapter 2.2]{Kato1995Perturbation} to break up any Jordan blocks with infinitesimal perturbations, exploits the fact that controllable systems are dense in the space of $n$-dimensional systems~\cite[Proposition 3.3.12]{Sontag1998Mathematical}, and finally uses the block Schur decomposition to bring $\rotA$ into the desired form and applies Givens rotations to establish the first column of $\rotB$ as repeated standard-basis vectors.

\paragraph{Enforcing stability} Stability of SSMs parametrized via rotation matrices can be achieved by enforcing that the entries of the $\bfrho$ vector remain below one for all layers. Thus, for example, it is sufficient to threshold the entries of $\bfrho$ with $\tanh$, which we use in our implementation. Moreover, we scale the parameters $\bfalpha$ using $\tanh$ to be contained in the interval $[0,\pi]$. With this parametrization, we can attain any pair of complex-conjugate eigenvalues inside the unit-disk and thus have a simple and differentiable parametrization of stable systems.

\subsection{Scalable training procedure}\label{sec:ScalTrain}

\paragraph{Leveraging block-diagonal structure for computing singular values} Recall that if we want to regularize the distribution of the Hankel singular values, then we have to compute them at least once in each gradient-descent step during training. The standard algorithm for computing Hankel singular values is solving the Lyapunov equations \eqref{eq:ctl_lyap} and \eqref{eq:obs_lyap} with the Bartels-Stewart algorithm~\cite{BartelsS1972Solution}, which incurs costs that scale as $\mathcal{O}(n^3)$ with the system order $n$. We now introduce an algorithm that leverages the block-diagonal structure \eqref{eq:RotParam} of our system parametrization and achieves a cost scaling of $\mathcal{O}(n^2)$. We describe the algorithm for the control Lyapunov equation~\eqref{eq:ctl_lyap}; the treatment of~\eqref{eq:obs_lyap} is analogous.

Plugging our parametrization \eqref{eq:RotParam} into the control Lyapunov equation~\eqref{eq:ctl_lyap} leads to 
\begin{align*}
  \left[
\begin{smallmatrix}
  \rotA_1 \strut & 0 & \cdots & 0 \\
  \strut 0 & \rotA_2 & \cdots & 0 \\
  \svdots & \svdots & \sddots & \svdots \\
  \strut 0 & 0 & \cdots & \rotA_q
\end{smallmatrix}
\right]\hspace*{-0.25cm}
\Pblocked\hspace*{-0.25cm}
\left[
\begin{smallmatrix}
  \strut {\rotA_1}^\top & 0 & \cdots & 0 \\
  0 & {\rotA_2}^\top & \cdots & 0 \\
  \strut \svdots & \svdots & \sddots & \svdots \\
  0 & 0 & \cdots & {\rotA_q}^\top
\end{smallmatrix}
\right]\hspace*{-0.15cm}
-\hspace*{-0.15cm}
\Pblocked\hspace*{-0.15cm}
+\hspace*{-0.15cm}
\left[
  \begin{smallmatrix}
  \rotB_1 \\ \strut \rotB_2 \\ \svdots \\ \strut \rotB_q
  \end{smallmatrix}
\right]\hspace*{-0.20cm}
\left[
  \begin{smallmatrix}
  \rotB_1 \\ \strut \rotB_2 \\ \svdots \\ \strut \rotB_q
  \end{smallmatrix}
\right]^\top
\hspace*{-0.25cm}=
\boldsymbol{0},
\end{align*}
where each $\bfP_{ij} \in \R^{2\times 2}$. This decomposes into $q$ Lyapunov equations %
\begin{align*}
  \rotA_i \bfP_{ii} {\rotA_i}^\top  - \bfP_{ii} + \rotB_i {\rotB_i}^\top=\boldsymbol{0},
\end{align*}
for $i \in \{1,\dots,q\}$ and $q(q-1)/2$ Sylvester equations (note that the solution $\bfP$ of~\eqref{eq:ctl_lyap} is symmetric)
\begin{align}
\rotA_i \bfP_{ij} {\rotA_j}^\top  - \bfP_{ij} + \rotB_i {\rotB_j}^\top =\boldsymbol{0},
\end{align}
for $i \in \{1,\dots,q\}$ and $j \in \{i+1,\dots,q\}$, which can be solved similarly by vectorizing as in \eqref{eq:lyap_simple_solver}, which is efficient for small 2-by-2 blocks.
As each block $\bfP_{ij}$ can be solved independently, solving for $\bfP$ can be efficiently parallelized. The overall costs scale as
$\mathcal{O}(q^2)$ and thus as $\mathcal{O}(n^2)$ because $q = n/2$. Note that the computational costs of the regularizer is independent of the sequence length $\seqlen$.

After computing the gramians $\bfP$ and $\bfQ$, the Hankel singular values are the singular values of the product $\bfP\bfQ$, which can be computed  using standard algorithms. %
Note that the block structure in the system matrix $\rotA$ does not imply a block structure in $\bfP$ and $\bfQ$. This is again because the Hankel operator does not consider the subsystems separately but considers their interaction as well. For a block structure in $\bfP$ and $\bfQ$, the matrices $\rotB$ and $\rotC$ would have to be block-diagonal as well, which limits the expressivity of the SSM, and thus of the corresponding neural-network model.

\paragraph{Fast time integration with associative scan}
A major ingredient for making SSMs scalable is using parallel scans for computing output sequences. For diagonal matrices $\rotA$, a parallel scan version has been introduced in \cite{SmithWL2023Simplified}, but it  critically depends on $\bfA$ to be a diagonal matrix to keep the costs in $\mathcal{O}(n)$. The work \cite{ROTNN} builds on rotation matrices as well and proposes a parallel scan version that explicitly calculates products of the blocks.  In contrast, we now  show that an analogous associative scan operation exists for our parametrization via rotation matrices that avoids having to compute products of the blocks explicitly.  

For an associative binary operator $\star$, i.e.\ an operator such that $(a \star b) \star c = a \star (b\star c)$, and a sequence of elements $[a_1, \dots, a_{\seqlen}]$, the scan operation returns $[a_1, (a_1 \star a_2), \dots, (a_1 \star a_2 \star \dots \star a_{\seqlen})]$.
In~\cite{SmithWL2023Simplified}, the authors consider the associative binary operation
\begin{align*}
\star:(\C^{n\times n}, \C^n), (\C^{n \times n}, \C^n) \to (\C^{n\times n}, \C^n), ((\mathbf{X}, \mathbf{x}), (\bfY, \bfy)) \mapsto (\bfY \bfX, \bfY  \bfx   + \bfy),
\end{align*}
where the matrix-matrix product $\bfY\bfX$ can be computed in $\mathcal{O}(n)$, when $\bfX$ and $\bfY$ are diagonal matrices. This is the key why SSMs with diagonal matrices in \cite{SmithWL2023Simplified} are scalable.  The sequence, on which the scan operates, is then initialized as
\begin{align}
    a=[(\bfA, \bfB\bfu_1), (\bfA,\bfB\bfu_2), \dots, (\bfA, \bfB\bfu_{\seqlen})].
\end{align}
It is easy to verify that the scan with $\star$ over $a$ leads to the state sequence of \eqref{eq:discrete_system} by noting that the scan output elements can be written as $s_i = (\bfA^i, \sum_{k=1}^i \bfA^{i-k}\bfB\bfu_k)$. The second element of each tuple $s_i$  is exactly the state $\bfx_i$ for a discrete system~\eqref{eq:discrete_system}. Here $\bfA^i$ denotes the $i$-fold matrix product of $\bfA$ with itself.

For our rotation-based parametrization, we can define a similar associative binary operation
\begin{align}
\label{eq:ourbinarymap}
\tilde \star:&(\R^{q}, \R^q, \R^n),(\R^{q}, \R^q, \R^n)  \to (\R^{q}, \R^q, \R^n),\\
\nonumber
&(\bfx^{(1)}, \bfx^{(2)},\bfx^{(3)}),
(\bfy^{(1)}, \bfy^{(2)},\bfy^{(3)}) \mapsto
(
(\bfx^{(1)}\odot \bfy^{(1)}),
(\bfx^{(2)} +  \bfy^{(2)}),
\bfA(\bfy^{(1)},\bfy^{(2)})\bfx^{(3)}+\bfy^{(3)}),
\end{align}
where $\odot$ denotes the Hadamard product and $\bfA(\cdot, \cdot)$ is formed as in~\eqref{eq:RotParam} for its vector-valued arguments. We can then initialize the scan sequence for each layer with
\begin{align*}
b=[
(\boldsymbol{\rho}, \boldsymbol{\alpha}, \bfB\bfu_1),
(\boldsymbol{\rho}, \boldsymbol{\alpha}, \bfB\bfu_2),
\dots,
(\boldsymbol{\rho}, \boldsymbol{\alpha}, \bfB\bfu_{\seqlen})].
\end{align*}
We can verify that a scan with $\tilde \star$ over $b$ leads to the state sequence because the scan output elements can be written as $s_i = (\bfA(\bfrho, \bfalpha))^i, \sum_{k=1}^i ({\bfA}(\bfrho, \bfalpha))^{i-k}\bfB\bfu_k)$.
Here we use the fact that for scalars $x,y,\beta,\gamma$, we have that
\begin{align*}
  x
  \left[
  \begin{matrix}
    \cos(\beta) & \sin(\beta) \\
    -\sin(\beta) & \cos(\beta) \\
  \end{matrix}
\right]
  y
  \left[
  \begin{matrix}
    \cos(\gamma) & \sin(\gamma) \\
    -\sin(\gamma) & \cos(\gamma) \\
  \end{matrix}
\right]
  =x y
  \left[
  \begin{matrix}
    \cos(\beta+\gamma) & \sin(\beta+\gamma) \\
    -\sin(\beta+\gamma) & \cos(\beta+\gamma) \\
  \end{matrix}
\right],
\end{align*}
such that we can use that $\bfA(\bfx, \boldsymbol{\beta})\bfA(\bfy, \boldsymbol{\gamma})=
\bfA(\bfx\odot\bfy, \boldsymbol{\beta}+\boldsymbol{\gamma})$ in~\eqref{eq:ourbinarymap}. This avoids having to compute the product of all blocks on the diagonal as in~\cite{ROTNN}. 
Using our parallel scan operation, the costs of generating an output sequence of length $\seqlen$ scale as $\mathcal{O}(\log(\seqlen) n)$, assuming $\seqlen$ processors which is the same scaling as when using diagonal matrices as in~\cite{SmithWL2023Simplified}. 

\subsection{Regularizing Hankel singular values during training}
\paragraph{Differentiable regularizers involving Hankel singular values}

Building on the efficient computation of Hankel singular values from the previous section, we  now develop a regularizer $\mathcal{R}$ that depends on the Hankel singular values of all layers. \changed{We stress that building regularizers based on the Hankel singular values is standard practice in systems and control theory \cite{PILLONETTO2016137,sun2022identificationnuclearnormregularization,doi:10.1137/23M1554308} and has been proposed in the work \cite{Forgione2024} for deep state-space models for the first time; see Section~\ref{sec:LitReview} for an in-depth comparison to these works.} 

First, we show \changed{a new result} that even though the individual Hankel singular values are not differentiable with respect to the entries of the system matrices \eqref{eq:RotParam} (i.e., the network parameters), the sum of the Hankel singular values is differentiable. We follow similar  arguments as used in~\cite[Proposition 3.7]{SchwerdtnerV2023SOBMOR}, which uses that different
branches of singular value curves that intersect each other and form a non-simple singular value still add up smoothly locally. Denote with $\bfsigma^{(\ell)} = [\sigma_1^{(\ell)}, \dots, \sigma_n^{(\ell)}]$ the singular values of the system at layer $\ell$.

\begin{proposition}\label{prop:SumsOfSVsDiff}
    Given an asymptotically stable matrix $\bfA$, as well as  $\bfB$ and $\bfC$ such that the pairs $(\bfA, \bfB)$ and $(\bfA, \bfC)$ are controllable and observable, respectively, let $\bfP$ and $\bfQ$ be the solutions \eqref{eq:ctl_lyap} and \eqref{eq:obs_lyap}, respectively. Then the sum of Hankel singular values $\sum_{i=1}^n \sigma_i$ of $\bfP\bfQ$ depends smoothly on $\bfA$, $\bfB$, and $\bfC$.
\end{proposition}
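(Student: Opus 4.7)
The plan is to bypass the non-smoothness of individual singular value branches by rewriting the sum as the trace of a smooth matrix function applied to a symmetric positive definite matrix, so the crossings never appear explicitly.

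First I would establish smooth (in fact rational) dependence of the gramians on $(\bfA, \bfB, \bfC)$. Since $\bfA$ is asymptotically stable, the spectrum of $\bfA$ lies strictly inside the unit disk, so the linear operator $\mathcal{L}_{\bfA}: \bfX \mapsto \bfA\bfX\bfA^\top - \bfX$ (and its adjoint analogue) has $\bfA \otimes \bfA - \bfI$ as its $\operatorname{vec}$-matrix, which is invertible. Hence the implicit function theorem applied to \eqref{eq:ctl_lyap} and \eqref{eq:obs_lyap} gives unique solutions $\bfP(\bfA,\bfB)$ and $\bfQ(\bfA,\bfC)$ that depend smoothly on their arguments. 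Controllability and observability of $(\bfA,\bfB)$ and $(\bfA,\bfC)$ in turn guarantee that $\bfP$ and $\bfQ$ are symmetric positive definite (SPD); both properties are open in the underlying parameters, so they persist in a neighborhood.

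Next I would symmetrize. Because $\bfP$ is SPD, its unique SPD square root $\bfP^{1/2}$ exists, and a similarity transform shows that $\bfP\bfQ$ has the same spectrum as the SPD matrix
\[
\bfM(\bfA,\bfB,\bfC) \;:=\; \bfP^{1/2} \bfQ\, \bfP^{1/2}\,.
\]
All its eigenvalues are strictly positive, so
\[
\sum_{i=1}^{n} \sigma_i \;=\; \sum_{i=1}^{n} \sqrt{\lambda_i(\bfP\bfQ)} \;=\; \sum_{i=1}^{n} \sqrt{\lambda_i(\bfM)} \;=\; \operatorname{tr}\!\bigl(\bfM^{1/2}\bigr)\,,
\]
where $\bfM^{1/2}$ is the unique SPD square root of $\bfM$.

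Finally I would chain smoothness through each map. The composition $(\bfA,\bfB,\bfC) \mapsto (\bfP,\bfQ) \mapsto \bfM$ is smooth (smoothness of $\bfP \mapsto \bfP^{1/2}$ on the SPD cone follows from the implicit function theorem applied to $F(\bfX,\bfP) = \bfX^2 - \bfP$, whose $\bfX$-derivative $\bfY \mapsto \bfX\bfY + \bfY\bfX$ is invertible for SPD $\bfX$). The same implicit function argument, applied now to $G(\bfX,\bfM) = \bfX^2 - \bfM$, shows that $\bfM \mapsto \bfM^{1/2}$ is smooth on the open SPD cone, and $\operatorname{tr}$ is linear. Composing all of these gives the desired smoothness of $\sum_i \sigma_i$.

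The main obstacle is conceptual rather than computational: at a parameter where two Hankel singular values coalesce, the individual branches $\sigma_i$ fail to be differentiable (only Lipschitz, with the classical square-root-style kink), so a naive term-by-term differentiation breaks down. The trace-of-square-root reformulation is what eliminates this obstacle, because the principal matrix square root stays analytic on the full open SPD cone irrespective of multiplicities. An alternative route, closer to the one cited from \cite[Proposition 3.7]{SchwerdtnerV2023SOBMOR}, would invoke Kato's perturbation theory to argue that the total eigenprojection onto any cluster of coalescing eigenvalues of $\bfM$ is analytic, so the partial sums of $\lambda_i$ inside each cluster are analytic, and $\sqrt{\cdot}$ applied to their (strictly positive) reconstructed values remains smooth; both routes yield the same conclusion.
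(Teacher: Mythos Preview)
Your argument is correct. The route, however, differs from the paper's. The paper establishes smooth dependence of $\bfP$ and $\bfQ$ on $(\bfA,\bfB,\bfC)$ in the same way you do (invertibility of $\bfA\otimes\bfA-\bfI$ under asymptotic stability), but then handles eigenvalue crossings in $\bfP\bfQ$ by invoking Kato's perturbation theory \cite[Theorem~6.8]{Kato1995Perturbation}: at a non-simple eigenvalue one can choose smooth branches whose sum remains differentiable, and controllability/observability keep all eigenvalues away from zero so that the square roots inherit smoothness. Your trace-of-square-root reformulation $\sum_i\sigma_i=\operatorname{tr}(\bfM^{1/2})$ with $\bfM=\bfP^{1/2}\bfQ\bfP^{1/2}$ sidesteps the crossing analysis entirely, because smoothness of the principal square root on the open SPD cone is a single implicit-function-theorem fact that is insensitive to multiplicities and, importantly, holds for the full multi-parameter dependence on $(\bfA,\bfB,\bfC)$ without any one-parameter reduction. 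The paper's Kato route stays closer to the cited argument in \cite[Proposition~3.7]{SchwerdtnerV2023SOBMOR} and makes the ``branches add up smoothly'' intuition explicit; your route is more self-contained and arguably cleaner for the multivariate setting. You even sketch the Kato alternative in your final paragraph, so you have in effect covered both proofs.
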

For a proof, see Appendix~\ref{sec:appdx:SmoothHSV}. 

\paragraph{Regularizing the Hankel nuclear norm}

Nuclear norm regularization, i.e., penalizing the sum of singular values of a matrix, to encourage singular values to decay rapidly, is common practice in machine learning\changed{; see, e.g., \cite{doi:10.1137/23M1554308,Forgione2024} for uses cases in reducing systems}.
With the tools we just developed, we can now regularize sums of Hankel singular values of SSM layers for a fast Hankel singular value decay.
In particular, the Hankel nuclear norm of a system is the sum of its Hankel singular values. 

For this, we introduce the regularizer
\begin{equation}\label{eq:HankelNuclearNorm}
\mathcal{R}_*(\bfsigma^{(1)}, \dots, \bfsigma^{(L)}) = \sum_{\ell = 1}^L \sum_{i = 1}^n\sigma_i^{(\ell)}\,,
\end{equation}
which is added to the loss during training. Recall that Proposition~\ref{prop:SumsOfSVsDiff} in combination with our parametrization guarantees that $\mathcal{R}_*$ is differentiable with respect to the neural-network parameters.

\subsection{Compressing the trained models (post-processing)}

\paragraph{Compressing regularized SSMs with balanced truncation}
After having trained a DNN with SSM layers with regularized Hankel singular values, we can apply off-the-shelf model reduction methods to compress the SSM layers. We use balanced truncation to compute the compressed (reduced) systems, because it is well studied and developed and the reduced system inherits favorable properties such as stability from the original, full system. We use the standard square-root method~\cite[Chapter 6.2]{BennerOCW2017Model} to compute the balanced truncation SSM with reduced state dimension. For this, in each layer we first compute the final controllability and observability gramians $\bfP$ and $\bfQ$ by solving~\eqref{eq:ctl_lyap} and~\eqref{eq:obs_lyap}. After that, we compute the singular value decomposition $\bfPhi\bfSigma\bfPsi^\top = \svd(\bfS^\top \bfR)$, where $\bfS\bfS^\top=\bfQ$ and $\bfR\bfR^\top=\bfP$ are Cholesky decompositions of $\bfQ$ and $\bfP$. Then we can define the projection matrices $\bfV=\bfS^\top \bfPhi_{:, :r} \bfSigma_{:r,:r}^{-\frac12}$ and $\bfW=\bfR^\top \bfPsi_{:, :r} \bfSigma_{:r,:r}^{-\frac12}$, where for a matrix $\bfM$, the expression $\bfM_{:, :r}$ denotes the first $r$ columns of $\bfM$ and $\bfM_{:r, :r}$ denotes the upper left $r$-dimensional block of $\bfM$. The reduced $r$-dimensional SSM is then obtained as $[\bfW^\top \bfA \bfV, \bfW^\top \bfB, \bfC\bfV, \bfD]$.

For balanced truncation, the error incurred in the sequence-to-sequence map $\{\bfu_k\}_k \mapsto \{\hat{\bfy}_k\}_k$ of the compressed system is bounded as 
\begin{equation}\label{eq:ErrorOfBTModels}
\|\hat{\bfy}_k - \bfy_k\|_{\ell_2} \leq 2 \|\bfu\|_{\ell_2} \sum_{i = r + 1}^n \sigma_i
\end{equation}
and thus controlled by the sum of truncated Hankel singular values. Thus, the bound \eqref{eq:ErrorOfBTModels} provides a viable criterion for choosing the compression order $r$ in the different layers; see next paragraph.

\paragraph{Balancing the reduced order $r$ of compressed SSMs across all layers}
The Hankel singular values give us guidance to which order $r \ll n$ to compress the regularized SSMs: We define the energy of the SSM at layer $\ell$ as $e^{(\ell)} = \sum_{i = 1}^n \sigma_i^{\ell}$, which is the sum of all Hankel singular values of the SSM at layer $\ell$. We then prescribe a criterion such as retaining 99\% of all energy, which means truncating at order $r$ so that $e^{\ell}_r/e^{\ell} = 0.99$ for $e^{\ell}_r = \sum_{i = 1}^r\sigma_i^{(\ell)}$. We stress that there is a one-to-one correspondence to the error incurred in the sequence-to-sequence map, because of the bound \eqref{eq:ErrorOfBTModels} satisfied by models compressed with balanced truncation. 
Notice that we prescribe the same energy criterion (e.g., 99\%) for all layers $\ell = 1, \dots, L$ but that the corresponding compression order $r_1, \dots, r_{L}$ can be different for each layer. 

Alternatively, we can prescribe a total budget $r_t = \sum_{\ell = 1}^L r^{(\ell)}$ of state dimensions, where $r^{(1)}, \dots, r^{(\ell)}$ are the state dimensions of the SSMs corresponding to layers $\ell = 1, \dots, L$. Given a total budget $r_t$, we can then distribute the state dimensions across the $\ell = 1, \dots, L$ layers such that the same amount of energy is preserved in each layer. For this, we use a bisection algorithm that is described in detail in the Appendix in Section~\ref{subsec:bisection_algo}.

\paragraph{Diagonalizing compressed systems} The reduced systems are balanced but not necessarily diagonal or block-diagonal, which is essential for an efficient application of the associative scan operations; see Section~\ref{sec:ScalTrain}.
We diagonalize the compressed SSMs using an eigenvalue decomposition: Let $\bfA_r=\bfW^\top \bfA \bfV$ be the reduced system matrix. Then we can compute an eigenvalue decomposition $\bfT \bfLambda_r \inv{\bfT}=\bfA_r$, where $\bfLambda_r\in\R^{r\times r}$ is diagonal. An equivalent diagonal system to $[\bfW^\top \bfA \bfV, \bfW^\top \bfB, \bfC\bfV, \bfD]$ is then given by $[\inv{\bfT} \bfW^\top \bfA \bfV\bfT, \inv{\bfT}\bfW^\top \bfB, \bfC\bfV\bfT, \bfD]$. The diagonalized system might be  complex-valued like the systems in~\cite{SmithWL2023Simplified}; however, since the eigenvalues will appear in complex-conjugate pairs, a real-valued input sequence will be mapped to a real-valued output sequence, which is also used in~\cite{SmithWL2023Simplified}.

\begin{figure}
    \centering
    \resizebox{\textwidth}{!}{\input{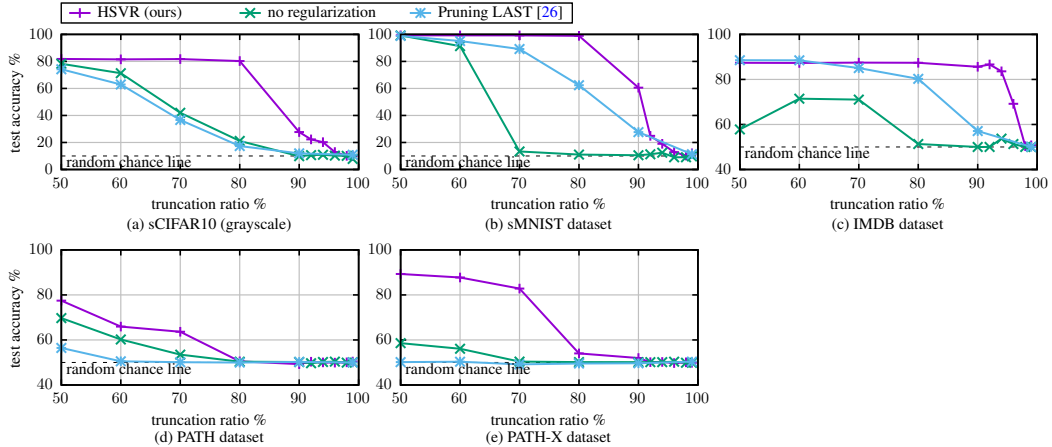}}
    \caption{Regularizing Hankel singular values leads to highly compressible SSMs while maintaining accuracy.}
    \label{fig:all_accuracy}
\end{figure}

\begin{figure}
    \centering
    \resizebox{\textwidth}{!}{\input{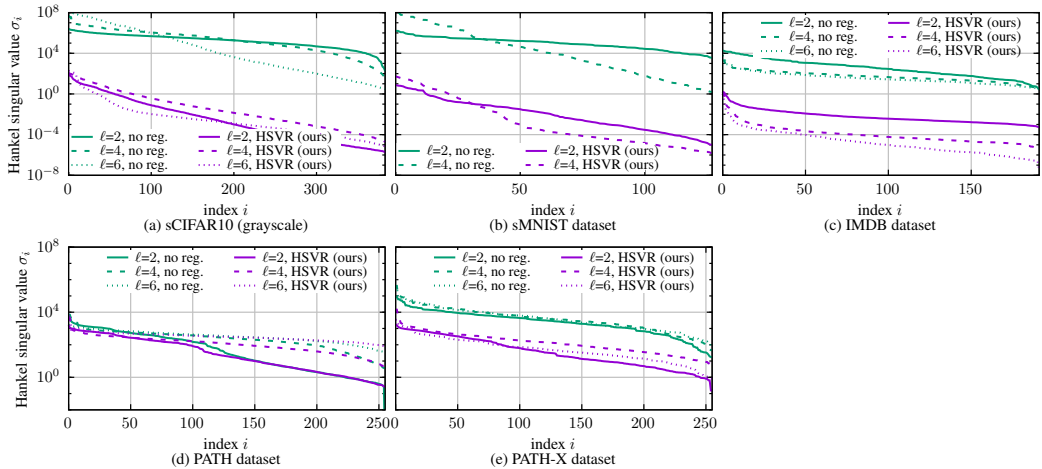}}
    \caption{Our HSVR approach trains SSMs that have favorably Hankel singular value decay for compression.}
    \label{fig:imdb_accuracy}
\end{figure}

\begin{table}[t]
\centering
\caption{Test accuracies for different methods for different truncation ratios.}
\label{tab:test_accuracies}
\resizebox{\textwidth}{!}{
\begin{tabular}{r|cccc|cccc|cccc}
\toprule
Method & \multicolumn{4}{c|}{sCIFAR (grayscale)} & \multicolumn{4}{c|}{sMNIST} & \multicolumn{4}{c}{IMDB} \\
 trunc. ratio& 60\% & 70\% & 80\% & 90\%
 & 60\% & 70\% & 80\% & 90\%
 & 60\% & 70\% & 80\% & 90\%\\
\midrule
LAST~\cite{GwakMKP2024Layer-Adaptive} &
62.93 & 36.66 & 17.35 & 11.19 & 95.11 & 89.17 & 62.37 & 27.67 & \textbf{88.48} & 85.05 & 80.26 & 57.08 \\
global~\cite{GwakMKP2024Layer-Adaptive} &
28.91 & 13.62 & 11.12 & 10.47 & 91.67 & 83.32 & 52.52 & 21.94 & 88.28 & \textbf{87.70} & 83.75 & 63.80 \\
uniform~\cite{GwakMKP2024Layer-Adaptive} &
58.90 & 34.45 & 19.18 & 12.67 & 97.74 & 79.20 & 44.38 & 23.20 & 82.44 & 77.34 & 64.79 & 53.22 \\
\midrule
no reg. &
71.28 & 41.98 & 21.14 & 9.84 & 91.32 & 13.35 & 11.05 & 10.55 & 71.45 & 71.04 & 51.32 & 50.00 \\
\rowcolor{gray!20} 
HSVR (ours) &
\textbf{81.84} & \textbf{81.75} & \textbf{81.37} & \textbf{51.08} & \textbf{99.45} & \textbf{99.22} & \textbf{98.90} & \textbf{86.95} & 87.26 & 87.16 & \textbf{86.97} & \textbf{86.40} \\
\bottomrule
\end{tabular}
}

\vspace{0.3cm}
\resizebox{0.75\textwidth}{!}{
\begin{tabular}{r|cccc|cccc}
\toprule
Method & \multicolumn{4}{c|}{PATH} & \multicolumn{4}{c}{PATH-X}  \\
 trunc. ratio& 60\% & 70\% & 80\% & 90\%
 & 60\% & 70\% & 80\% & 90\%\\
\midrule
LAST~\cite{GwakMKP2024Layer-Adaptive} &
50.51& 50.11& 49.96& \textbf{50.25}& 50.33& 49.16& 49.53& 49.64\\
global~\cite{GwakMKP2024Layer-Adaptive} &
49.16& 50.15& 50.16& 49.37& 49.50& 50.93& 49.14& 50.61\\
uniform~\cite{GwakMKP2024Layer-Adaptive}&
50.32& 49.78& 49.84& 50.16& 49.74& 50.23& 49.70& 50.47\\
\midrule
no reg. &
60.21  & 53.48 & 50.35 & 50.20 & 56.09 & 50.39 & 50.16 & 50.14 \\
\rowcolor{gray!20} 
HSVR (ours) &
\textbf{65.94} & \textbf{63.64} & \textbf{50.50} & 49.27 & \textbf{87.74} & \textbf{82.82} & \textbf{54.02} & \textbf{51.97} \\
\bottomrule
\end{tabular}
}

\end{table}

\section{Results}

\paragraph{Benchmarks} We demonstrate our HSVR approach on five sequence classification examples.
The first example consists of the 32×32 CIFAR-10 images~\cite{krizhevsky2009learning} that are converted to grayscale, flattened into 1,024-length sequences, and normalized to zero mean and unit variance across the entire dataset. It includes 50,000 training, and 10,000 test samples and has ten target classes.
The second example is also a sequentialized image classification task and consists of the 28×28 grayscale MNIST~\cite{lecun1998gradient} images, where again each image is flattened into a sequence of 784 scalar values. The goal is to predict the depicted written digits correctly.
The third task uses the IMDB sentiment dataset \cite{maas2011learning}, where movie reviews are represented as sequences of one-hot encoded characters with 129 possible values, padded to a maximum length of 4,096. The goal is to classify each review as positive or negative; the dataset includes 25,000 training and 25,000 test examples.
Finally, we consider the PATH and PATH-X datasets, which consist of the flattened pathfinder images~\cite{NEURIPS2018_ec895663}, which consists of two points and a set of paths. The classifier must determine whether the two points are connected by the paths. The flattened PATH images have a sequence length of 1,024 and flattened PATH-X images have a sequence length of 16,384a.
We denote the examples by sCIFAR (grayscale), sMNIST, IMDB, PATH, and PATH-X. The examples sCIFAR (grayscale), IMDB, PATH, and PATH-X are also part of the Long Range Arena (LRA) benchmark~\cite{TayDASBPRYRM2020Long} collection.

\paragraph{Setup} We select the state, input, and output dimensions of our SSMs according to the setup in~\cite{SmithWL2023Simplified}. In particular, we use a state dimension $n=384$, and input and output dimensions $m=p=512$ for sCIFAR 10 (grayscale), $n=m=p=128$ for sMNIST, $n=192, m=p=256$ for IMDB, \changed{$n=256$, $m=p=192$ for PATH, and $n=256$, $m=p=128$ for PATH-X}. 
As in~\cite{SmithWL2023Simplified} for sCIFAR 10 (grayscale) IMDB, PATH, and PATH-X, we use 6 SSM layers and for sMNIST we use 4 layers.
The remainder of the model architecture, which we describe alongside the training parameters in the Appendix in Section~\ref{sec:appdx:expsetup}, is also the same as in~\cite{SmithWL2023Simplified}. In all examples, we use HSVR with the Hankel nuclear norm regularizer \eqref{eq:HankelNuclearNorm}, even though other regularizers based on the Hankel singular values could be used, which remains future work.
One notable difference compared to~\cite{SmithWL2023Simplified,GwakMKP2024Layer-Adaptive} is that we only use unidirectional associative scans, whereas~\cite{SmithWL2023Simplified,GwakMKP2024Layer-Adaptive} scan bidirectionally for sCIFAR (grayscale) and IMDB. This means that  they apply the associative scan to the given and the reversed sequence and double the dimension of the output matrices of the SSMs to merge both sequences into one output sequence.

\paragraph{SSMs trained with our HSVR are highly compressible} In Figure~\ref{fig:all_accuracy}, we show the model accuracy on the given test data, as we increase the \emph{truncation ratio} $\chi$, which is the maximum allowed average reduced state dimension across all layers.
For an original state dimension $n$, the maximum allowed average reduced state dimension $r$ is $n(1-\chi)$.
For our comparison to~\cite{GwakMKP2024Layer-Adaptive}, we extracted the accuracies reported in Figures~2 and 6 in~\cite{GwakMKP2024Layer-Adaptive}. For HSVR, we show the median over three training runs initialized with different random seeds; standard deviations are reported in the Appendix in Section~\ref{sec:appdx:extra_results}.
The results in Figure~\ref{fig:all_accuracy} clearly demonstrate the effectiveness of our proposed HSVR. The accuracy of the full SSM model is retained for truncation ratios of $80\%$ for sCIFAR (grayscale) and sMNIST and even for over $90\%$ truncation ratio for the IMDB dataset. \changed{Even for the more challenging PATH and PATH-X datasets, we can observe a higher test accuracy for larger truncation ratios compared to other methods.} Without regularized training, the accuracy drops much earlier, which is also the case for LAST-based pruning of~\cite{GwakMKP2024Layer-Adaptive}, which also follows the S5 architecture in its experiment setup. The results in Figure~\ref{fig:imdb_accuracy} provide further evidence that HSVR leads to favorable Hankel singular value decay compared to unregularized training. \changed{Note that only for the PATH dataset our HSVR regularizer did not lead to a significant difference in the HSV distribution when comparing with unregularized training; especially when comparing with the HSV distributions for the other datasets in Figure~\ref{fig:imdb_accuracy}, where a clear gap appears between regularized and unregularized training.
Moreover, on the PATH dataset, we achieved the smallest test accuracy improvement. This again emphasizes that the HSV distribution is key when considering the compressibility of SSMs.}

\paragraph{HSVR achieves higher compression than previous methods} In Table~\ref{tab:test_accuracies}, we conduct a comparison to all pruning methods proposed in~\cite{GwakMKP2024Layer-Adaptive} as well as training our models with no regularization.
Overall, Table~\ref{tab:test_accuracies} again demonstrates the benefits of HSV regularized training. Our HSVR approach outperforms all other compression methods over a wide range of compression ratios and accuracy ranges; the only exception being at very low compression ratios, which are of less interest in most cases. For example, we maintain accuracy of around to 99\% with a compression ratio of 80\% in the sMNIST data set, while compressing unregularized SSMs leads to an accuracy drop to almost 10\%. %
\changed{Notably, with our regularization we can maintain a high accuracy on the challenging PATH-X dataset even at compression ratios above 60\%, where prior methods collapse to random-chance performance.}

\section{Conclusions, limitations, and impact statement}

\paragraph{Conclusions} %
We demonstrated that regularizing the Hankel singular values of the SSMs is key for compression. While the individual Hankel singular values are not differentiable, their sum is, which is all that is needed for obtaining a differentiable regularizer. A key aspect is that we developed an algorithm that can efficiently compute the Hankel singular values to keep training costs low. Experiments with standard LRA benchmark examples demonstrate that we can compress models by up to 90\% while maintaining acceptable accuracy. \changed{An implementation is provided at \url{https://github.com/Algopaul/hankelreg}}.

\paragraph{Limitations} (a) Models need to be trained with the regularizer to achieve compressibility, which means that our compression approach is not applicable to pre-trained models without our regularizer. Because  it is known that linear equivalence transformations cannot change the Hankel singular values, it remains future work to find nonlinear transformations to achieve compressibility also for pre-trained models. (b) By regularizing the Hankel singular values and compressing with system-theoretic tools such balanced truncation, we are restricted to linear compression. This is reasonable as the SSMs are linear in the state too but there can exist more efficient nonlinear compressions. Rigorous nonlinear compressions for dynamical systems are an active research direction in systems and control theory \cite{P22AMS} and it remains future work to develop corresponding regularizers for SSM layers and neural-network models. (c) We focus on SSMs that are linear time-invariant systems; however, using time-varying system matrices can increase expressivity of the corresponding neural-network models without increasing parameter count and they are explored in MAMBA architecture \cite{GuD2023MAMBA}. It remains future work to extend our approach to systems with time-varying system matrices. %

\paragraph{Impact statement} We are not expecting negative societal impacts that are specific to our compression approach. 
\clearpage

\section*{Acknowledgements} The authors have been partially funded by the Air Force Office of Scientific Research (AFOSR), USA, award FA9550-24-1-0327.

\bibliographystyle{plain}
\bibliography{references,donotchange}

\appendix

\label{sec:appdx}

\section{Proofs}
\label{sec:appdx:proofs}

For our proofs, the following notions for LTI systems are helpful. An LTI system is controllable if the \emph{controllability matrix}
$\mathcal{C} = \begin{bmatrix} \bfB & \bfA\bfB & \bfA^2\bfB & \cdots & \bfA^{n-1}\bfB \end{bmatrix}$, i.e.\ $ \operatorname{rank}(\mathcal{C}) = n$.
Moreover, the system is \emph{observable} if the observability matrix
$
\mathcal{O} = \begin{bmatrix}
\bfC^\top &
(\bfC\bfA)^\top &
\cdots &
(\bfC\bfA^{n-1})^\top
\end{bmatrix}^\top
$
has full rank, i.e.\ $\operatorname{rank}(\mathcal{O}) = n$.

The system is \emph{asymptotically stable} if all eigenvalues of $\bfA$ lie strictly inside the unit circle.

\subsection{Parametrization with rotation matrices is dense}\label{sec:appdx:RotDenseProof}

\newtheorem*{proposition*}{Proposition}

We present a proof for Proposition~\eqref{lem:rotmat_generic}, which is restated here for convenience.

\begin{proposition*}
    For any linear time-invariant system of order $n$ there exists an infinitesimal perturbation such that
    the sequence-to-sequence map $\{\bfu_k\}_{k = 0}^{\infty} \mapsto \{\bfy_k\}_{k  = 0}^{\infty}$ of the perturbed model can be described by an SSM with matrices of the form~\eqref{eq:ourparam} with
    $\bfalpha,\bfrho \in \R^{\tilde n}$, where $\tilde n\le n$.
\end{proposition*}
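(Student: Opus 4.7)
The plan is to compose a sequence of infinitesimal perturbations with a real similarity transformation that brings the system into the target form \eqref{eq:ourparam}. First, I would apply an infinitesimal perturbation of $\bfA$ so that its spectrum is simple; Kato's perturbation theory \cite[Chapter 2.2]{Kato1995Perturbation} ensures that any Jordan block can be split by an arbitrarily small perturbation of the matrix entries. Since $\bfA$ is real, its simple eigenvalues form complex-conjugate pairs $\rho_i e^{\pm \mathrm{i} \alpha_i}$, possibly together with some real eigenvalues. A further infinitesimal perturbation makes the pair $(\bfA, \bfB)$ controllable, by density of controllable systems \cite[Proposition 3.3.12]{Sontag1998Mathematical}. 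If real eigenvalues remain, one more small perturbation shifts them pairwise off the real axis; an isolated real eigenvalue arising when $n$ is odd can be absorbed by enlarging the state by one, which is still consistent with the bound $\tilde{n}\le n$.

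Next, I would build a real invertible similarity $\bfV$ from the real Jordan/Schur decomposition associated with the complex-conjugate pairs, so that $\bfV^{-1}\bfA\bfV$ is block-diagonal with $2\times 2$ scaled-rotation blocks
\[
\bfA_i \;=\; \rho_i \begin{bmatrix} \cos\alpha_i & \sin\alpha_i \\ -\sin\alpha_i & \cos\alpha_i \end{bmatrix}.
\]
Finally, I would apply a block-diagonal similarity $\bfT = \mathrm{diag}(s_1 \bfR_{\theta_1}, \dots, s_{\tilde{n}} \bfR_{\theta_{\tilde{n}}})$, where each $\bfR_{\theta_i}$ is a $2\times 2$ rotation. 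Because scaled rotations commute with scaled rotations, this similarity leaves each $\bfA_i$ invariant. For each row-block $\widetilde{\bfB}_i \in \mathbb{R}^{2\times p}$ of $\widetilde{\bfB} = \bfV^{-1}\bfB$, I would choose $\theta_i$ to rotate its first column onto the positive $\bfe_1$-axis and $s_i$ to rescale it to unit length, yielding $\bfT^{-1}\widetilde{\bfB}$ whose first column matches the stacked $\bfe_1$ pattern of \eqref{eq:RotParam} while leaving the remaining $p-1$ columns unconstrained, as required by \eqref{eq:ourparam}.

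The main obstacle is this final Givens-rotation step: it requires that each row-block of $\widetilde{\bfB}$ has a nonzero first column, which is not implied by controllability of $(\bfA, \bfB)$ alone (controllability only forces each block to have rank at least one, not that the specific first column is nonzero). However, the set of $\bfB$ for which every first-column block is nonzero is open and dense in $\mathbb{R}^{n\times p}$, so one further infinitesimal perturbation of $\bfB$ secures this condition and can be absorbed into the overall perturbation. Composing all of the above yields an infinitesimally perturbed LTI system whose sequence-to-sequence map is realized by an SSM of the form \eqref{eq:ourparam} with $\bfalpha, \bfrho \in \mathbb{R}^{\tilde{n}}$ and $\tilde{n} \le n$.
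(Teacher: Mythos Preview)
Your overall strategy mirrors the paper's proof: perturb to simple spectrum via Kato, perturb to controllability via Sontag's density result, use a real block Schur decomposition to obtain block-diagonal $2\times 2$ scaled-rotation structure, and then apply block-diagonal scaled Givens rotations to normalize the first column of $\bfB$. Your observation that controllability alone does not guarantee that the \emph{first} column of each row-block of $\widetilde{\bfB}$ is nonzero is correct and in fact more careful than the paper, which asserts this follows ``since we have assumed controllability''; your additional density perturbation to secure this is a clean fix.

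There is, however, a genuine gap in your handling of real eigenvalues. The claim that ``one more small perturbation shifts them pairwise off the real axis'' is false in general: if $\bfA$ is real with two \emph{distinct} simple real eigenvalues $\lambda_1 \neq \lambda_2$, then every sufficiently small real perturbation of $\bfA$ still has two real eigenvalues near $\lambda_1$ and $\lambda_2$. Complex eigenvalues of a real matrix appear only in conjugate pairs, so producing a complex pair from two real eigenvalues requires first colliding them, which needs a perturbation of size comparable to $|\lambda_1 - \lambda_2|$ (equivalently, the discriminant $\operatorname{tr}(\bfA)^2 - 4\det(\bfA)$ is bounded away from zero and cannot change sign under an infinitesimal perturbation). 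The paper sidesteps this by keeping each real eigenvalue as a $1\times 1$ block and then, as a final bookkeeping step, embedding it into a $2\times 2$ block with $\alpha_i = 0$, inserting a zero row in $\bfB$ and a zero column in $\bfC$; the added state is uncontrollable and unobservable, so the impulse response is unchanged. You already invoke exactly this padding trick for the leftover eigenvalue when $n$ is odd, so the repair is simply to apply it to \emph{every} real eigenvalue rather than attempting to perturb pairs into the complex plane.
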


\begin{proof}
Let the LTI system be given by $[\bfA, \bfB, \bfC, \bfD]$.
As we allow for infinitesimal perturbations, we can assume without loss of generality (w.l.o.g), that $\bfA$ is diagonalizable~\cite[Chapter 2]{Kato1995Perturbation} and that the pair $(\bfA,\bfB)$ is controllable~\cite[Proposition 3.3.12]{Sontag1998Mathematical}.
For any nonsingular $\bfT \in \R^{n\times n}$, the impulse response of $[\bfA, \bfB, \bfC, \bfD]$ and $[\inv{\bfT} \bfA\bfT, \inv{\bfT}\bfB,\bfC \bfT, \bfD]$ coincide.
We will now construct an invertible $\bfT$ that brings $[\bfA, \bfB, \bfC, \bfD]$ into a form attainable by~\eqref{eq:ourparam}.
By~\cite[Theorem 7.4.1]{GolubV-L2013Matrix} we can find an orthogonal $\bfT_1$ such that $\bfT_1^\top A \bfT$ is in block upper triangular form with either 1-by-1 or 2-by-2 blocks on the diagonal, which contain the real and complex conjugate eigenvalues of $\bfA$, respectively.
Moreover, by~\cite[Theorem 7.1.6]{GolubV-L2013Matrix} and since we assume diagonalizabiliy, we can find a nonsingular $\bfT_2$ such that $\inv{(\bfT_1\bfT_2)}\bfA(\bfT_1\bfT_2)$ is block diagonal.  
W.l.o.g, we can assume that each 2-by-2 block has the form
\begin{align*}
T_{ii} = \begin{bmatrix}
    a_i & b_i \\ -b_i & a_i
\end{bmatrix},
\end{align*}
as the 2-by-2 blocks contain complex conjugate eigenvalues.
The final equivalence transformation we apply is a block diagonal transformation that brings the first column of $\inv{(\bfT_1\bfT_2)}\bfB$ into the desired form.
For this, let
\begin{align*}
    \inv{(\bfT_1\bfT_2)}\bfB = 
    \begin{bmatrix}
        \bfb_{1,1} & \bfb_{1,2} \\
        \bfb_{2,1} & \bfb_{2,2} \\
        \vdots & \vdots \\
        \bfb_{q,1} & \bfb_{q,2}
    \end{bmatrix},
\end{align*}
where $q$ is the number of blocks in $\inv{(\bfT_1\bfT_2)}\bfA(\bfT_1\bfT_2)$ and $\bfb_{i,1}$ is either in $\R^{1}$ or $\R^{2}$ depending on the corresponding block dimension  in $\inv{(\bfT_1\bfT_2)}\bfA(\bfT_1\bfT_2)$.
Now we construct $\bfT_{3}$ as block diagonal matrix such that that $\inv{\bfT_{3}}$ contains either $1/\bfb_{i,1}$ when $\bfb_{i,1}\in\R^{1}$ or a scaled Givens rotation $\gamma \bfG$ when $\bfb_{i,1}\in\R^2$ that is constructed such that $\gamma \bfG \bfb_{i,1}=[1,0]^\top$. Note that we can construct such a rotation for $\bfb_{i,1}\in\R^2$ or have a finite value $1/\bfb_{i,1}$ for $\bfb_{i,1}\in\R^1$ since we have assumed controllability. Moreover, we have that
$\inv{(\bfT_1\bfT_2\bfT_3)}\bfA(\bfT_1\bfT_2\bfT_3)=\inv{(\bfT_1\bfT_2)}\bfA(\bfT_1\bfT_2)$.
W.l.o.g. let us assume for convenience that the blocks in $\inv{(\bfT_1\bfT_2)}\bfA(\bfT_1\bfT_2)$ are sorted such that a partitioning
\begin{align*}
    \begin{bmatrix}
        \bfA_1  & \boldsymbol{0} \\
         \boldsymbol{0} & \bfA_2
    \end{bmatrix}
    =
    \inv{(\bfT_1\bfT_2)}\bfA(\bfT_1\bfT_2)
\end{align*}
can be chosen such that $\bfA_1$ is diagonal and $\bfA_2$ contains all 2-by-2 blocks.
Thus, the system $[\inv{\bfT} \bfA\bfT, \inv{\bfT}\bfB,\bfC \bfT, \bfD]$ with $\bfT=\bfT_1\bfT_2\bfT_3$ where $ \inv{\bfT}\bfB$ is of the form
\begin{align*}
\inv{\bfT}\bfB=
\left[
\begin{array}{cc}
    1 & \tilde\bfb_{1,2} \\
    \vdots & \vdots \\
    1 & \tilde\bfb_{q_1,2} \\
    \hline
    1 & \tilde\bfb_{q_1+1,2} \\
    0 & \tilde\bfb_{q_1+2,2} \\
    1 & \tilde\bfb_{q_1+3,2} \\
    0 & \tilde\bfb_{q_1+4,2} \\
    \vdots & \vdots \\
    1 & \tilde\bfb_{q_1+2q_2,2} \\
    0 & \tilde\bfb_{q_1+2q_2,2} \\
\end{array},
\right]
\end{align*}
where $\tilde \bfb_{i,2}\in \R^{1,m}$ and $q_1$ and $q_2$ are the number of real and the number of pairs of complex-conjugate eigenvalues in $\bfA$, respectively.
The final step to reach the form~\eqref{eq:ourparam} is to add a zero row in $\inv{\bfT}\bfB$ after each row in $\{1,\dots,q_1\}$ and a zero column in $\bfC\bfT$ after each column in $\{1, \dots q_1\}$. The system matrix is formed from $\tilde \bfA=\inv{\bfT} \bfA\bfT$ by choosing $\rho_i=\tilde \bfA_i$ and $\alpha_i=0$, when $\bfA_i$ is a 1-by-1 block and by setting $\rho_i=|\lambda_i|$, $\alpha_i=\arctan(|\Im(\lambda_i)|/\Re(\lambda_i))$, when $\bfA_i$ is a 2-by-2 block. Here $\lambda_i$ denotes one of the complex conjugate eigenvalues of $\bfA_i$.
Since we only applied equivalence transformations and the states we add in our final step do not change the impulse response (they are uncontrollable and unobservable) the impulse response remains the same.
Moreover, since the number of blocks in $\tilde \bfA$ is less than or equal to $n$, we can parametrize the system with $\bfrho=[\rho_1, \dots, \rho_{\tilde n}]$ and $\bfalpha=[\alpha_1, \dots, \alpha_{\tilde n}]$ where $\tilde n \le n$.
\end{proof}

\subsection{Proof that sums of Hankel singular values are differentiable}\label{sec:appdx:SmoothHSV}
We present a proof for Proposition~\eqref{prop:SumsOfSVsDiff}, which is restated here for convenience.

\begin{proposition*}
    Given an asymptotically stable matrix $\bfA$, as well as  $\bfB$ and $\bfC$ such that the pairs $(\bfA, \bfB)$ and $(\bfA, \bfC)$ are controllable and observable, respectively, let $\bfP$ and $\bfQ$ be the solutions \eqref{eq:ctl_lyap} and \eqref{eq:obs_lyap}, respectively. Then the sum of Hankel singular values $\sum_{i=1}^n \sigma_i$ depends smoothly on $\bfA$, $\bfB$, and $\bfC$.
\end{proposition*}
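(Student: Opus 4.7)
The key insight is to rewrite $\sum_{i=1}^n \sigma_i$ as the trace of a matrix square root, a symmetric functional of the eigenvalues that is insensitive to their crossings, thereby circumventing the non-smoothness of individual Hankel singular values at coalescences.

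First, I would establish smooth dependence of the gramians on $(\bfA, \bfB, \bfC)$. Asymptotic stability of $\bfA$ makes the discrete-time Lyapunov operator $\mathcal{L}_{\bfA}: \bfX \mapsto \bfA\bfX\bfA^\top - \bfX$ invertible on $\R^{n \times n}$: its eigenvalues $\lambda_i(\bfA)\lambda_j(\bfA) - 1$ are all nonzero because $|\lambda_i(\bfA)|, |\lambda_j(\bfA)| < 1$. The Kronecker formulas~\eqref{eq:lyap_simple_solver} then express $\bfP$ and $\bfQ$ as compositions of matrix inversion (smooth on its domain) with the smooth maps $(\bfA, \bfB, \bfC) \mapsto (\bfA\otimes\bfA - \bfI, \bfB\bfB^\top, \bfC^\top \bfC)$, yielding smoothness of $(\bfA, \bfB, \bfC) \mapsto (\bfP, \bfQ)$.

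Next, controllability of $(\bfA, \bfB)$ and observability of $(\bfA, \bfC)$ guarantee $\bfP \succ 0$ and $\bfQ \succ 0$. I would then introduce $\bfM := \bfQ^{1/2} \bfP \bfQ^{1/2}$, which is symmetric positive definite and similar to $\bfP\bfQ$ via $\bfM = \bfQ^{1/2}(\bfP\bfQ)\bfQ^{-1/2}$. Hence $\lambda_i(\bfP\bfQ) = \lambda_i(\bfM) > 0$, and
\begin{equation*}
\sum_{i=1}^n \sigma_i \;=\; \sum_{i=1}^n \sqrt{\lambda_i(\bfP\bfQ)} \;=\; \sum_{i=1}^n \sqrt{\lambda_i(\bfM)} \;=\; \operatorname{tr}\bigl(\bfM^{1/2}\bigr).
\end{equation*}

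To conclude, I would invoke smoothness of the matrix square root on the open cone of symmetric positive definite matrices: since $t \mapsto \sqrt{t}$ is real-analytic on $(0, \infty)$, the functional-calculus map $\bfM \mapsto \bfM^{1/2}$ is real-analytic there, as can be made concrete via a Dunford--Taylor contour integral around $\operatorname{spec}(\bfM) \subset (0,\infty)$ or via a standard integral representation that is manifestly smooth in $\bfM \succ 0$. Composing the smooth maps $(\bfA, \bfB, \bfC) \mapsto (\bfP, \bfQ) \mapsto \bfM \mapsto \bfM^{1/2} \mapsto \operatorname{tr}(\bfM^{1/2})$ yields the claim. The only substantive obstacle one might worry about---eigenvalue crossings of $\bfP\bfQ$---is handled automatically because we never differentiate individual eigenvalues; an alternative route tracking cluster sums via Kato's perturbation theory, as in~\cite[Proposition 3.7]{SchwerdtnerV2023SOBMOR}, would reach the same conclusion but less directly.
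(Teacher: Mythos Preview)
Your proof is correct and takes a genuinely different route from the paper's. Both arguments share the first step---smoothness of $(\bfA,\bfB,\bfC)\mapsto(\bfP,\bfQ)$ via invertibility of $\bfA\otimes\bfA-\bfI$---but diverge thereafter. The paper proceeds by case analysis on the spectrum of $\bfP\bfQ$: when all eigenvalues are simple, smoothness of each $\sigma_i$ is immediate; at coalescences, it invokes Kato's perturbation theory (\cite[Theorem~6.8]{Kato1995Perturbation}) to assert that the local eigenvalue branches can be chosen so that their \emph{sum} remains smooth, in the spirit of~\cite[Proposition~3.7]{SchwerdtnerV2023SOBMOR}. Your argument instead symmetrizes first, writing $\sum_i\sigma_i=\operatorname{tr}\bigl((\bfQ^{1/2}\bfP\bfQ^{1/2})^{1/2}\bigr)$, and then appeals to the real-analyticity of the principal square root on the open cone of symmetric positive-definite matrices. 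This is more direct: the trace representation is a symmetric function of the eigenvalues, so eigenvalue crossings never enter the analysis, and you avoid the branch-tracking machinery of Kato entirely. The paper's route, by contrast, is a template that would extend to other symmetric spectral functionals where no convenient closed-form trace expression is available; for the nuclear norm specifically, your approach is the cleaner of the two.
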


\begin{proof}
    \bp Note that if $\bfA$ is asymptotically stable, the matrix $(\bfA \otimes \bfA-\bfI)$ has no zero eigenvalues, such that the inverse in~\eqref{eq:lyap_simple_solver} exists and $\bfP$ depends smoothly on $\bfA$ and $\bfB$.
    \bp An analogous argument establishes the smooth dependency of $\bfQ$ on $\bfA$ and $\bfC$.
    \bp If all eigenvalues of $\bfP \bfQ$ are simple, the result is automatic.
    \bp For non simple eigenvalues of $\bfP \bfQ$, we can use \cite[Theorem 6.8]{Kato1995Perturbation} to define an set of smooth functions representing the repeated eigenvalues, such that their sum is differentiable as well. Note that controllability and observability of $\bfA, \bfB, \bfC$ ensures that $\bfP \bfQ$ has no zero eigenvalues.
\end{proof}

\section{Experimental setup}
\label{sec:appdx:expsetup}
For the model architecture, we closely follow the setup proposed in~\cite{SmithWL2023Simplified} for comparability. In particular, we use the same number of layers and SSM dimensions. Moreover, as in~\cite{SmithWL2023Simplified}, the SSM layers in between linear encoder and decoder layers, that map the dimension of the sequence (1 in all our experiments) to the SSM input dimension, and the SSM output dimension to the number of classes for classification, respectively.
In after each sequence layer, we apply the same nonlinearity as~\cite{SmithWL2023Simplified}, which is a weighted sigmoid gated unit~\cite{tanaka2020weighted}. It transforms the SSM output $y_t$ such that $\tilde y_t=\gelu(y_t) \odot \sigmoid(\bfW \gelu(y_t))$, with a learnable matrix $\bfW$.
As in~\cite{SmithWL2023Simplified} final output is mean-pooled to compress the output of the last SSM layer along the sequence length dimension to enable softmax classification of the given sequence.
Each sequence layer is preceded with batch-normalization.
While in~\cite{SmithWL2023Simplified} two different learning rates for the SSM parameters and the other parameters are used, we use a single learning rate for all parameters. As in~\cite{SmithWL2023Simplified}, we do not apply weight-decay to the SSM parameters (except for the feedthrough matrix $\bfD$ as it is not affected by our HSV regularization).
In Table~\ref{tab:exp_params} we show the parameters used to generate our results. The parameters for dropout, weight-decay and regularization magnitude (the scalar by which we multiply our regularizer~\eqref{eq:HankelNuclearNorm}) are found via grid-search. Note that the regularization magnitude is small because it includes a sum of all the HSVs in all the different layers.
Since we implement our regularizer by adding it to the softmax cross-entropy loss we use during training, this must be scaled, appropriately.

With this setup, in our \texttt{flax/nnx} implementation, it takes around two hours to train an sMNIST model, around four hours for training an IMDB model, and around six hours to train an sCIFAR model on a single H100 GPU. This is slightly higher than the train times reported in~\cite{GwakMKP2024Layer-Adaptive}; which we attribute to our slow train data input pipeline.
Parameters are initialized from standard normal distributions unless stated otherwise. The SSM parameters $\bfrho$ are initialized using Gaussian distributions with mean $1.5$, and standard deviation $0.25$, which yields to an eigenvalue distribution similar to that of HiPPO matrices after discretization. Note that, as stated in Section~\ref{sec:param}, $\bfrho$ is subsequently thresholded using a $\tanh$ nonlinearity. The matrices $\bfB$ and $\bfC$ in each state space layer are initialized with zero-mean Gaussian distributions, with standard deviation $1/\sqrt{n^2+m^2}$ and $1/\sqrt{n^2+p^2}$, respectively.
\begin{table}
    \centering
    \caption{Hyperparameter setup in our experiments. Depth denotes the number of sequence layers, LR the learning rate, WD the weight decay, and reg. mag. the scalar by which we multiply our regularizer~\eqref{eq:HankelNuclearNorm}.}
    \begin{tabular}{r|ccccccccc}
    \toprule
    & depth & $n$ & $p=m$ & dropout & LR & batch dim. & epochs & WD & reg. mag.\\
    \midrule
    sCIFAR & 6 & 384 & 512 & 0.2 & 0.001 & 50 & 200 & 0.3 & $0.00002$\\
    sMNIST & 4 & 128 & 128 & 0.1 & 0.001 & 50 & 250 & 0.1 & $0.00001$\\
    IMDB & 6 & 192 & 256 & 0.1 & 0.001 & 50 & 35 & 0.1 & $0.001$\\
    PATH & 6 & 256 & 192 & 0.1 & 0.001 & 64 & 100 & 0.03 & $0.00000001$\\
    PATH-X & 6 & 256 & 128 & 0.0 & 0.0001 & 16 & 30 & 0.0 & $0.00000001$\\
    \bottomrule
    \end{tabular}
    \label{tab:exp_params}
\end{table}

\section{Postprocessing details}

\subsection{Bisection algorithm for selecting ranks}
\label{subsec:bisection_algo}
We present our algorithm for selecting the ranks in the different layers in Algorithm~\ref{alg:bisection}.
Given a total budget $r_t$, it distributes the state dimensions across the $\ell = 1, \dots, L$ layers such that the same amount of energy is preserved in each layer.
It terminates once a prescribed tolerance or maximum number of iterations is reached. In our experiments, we set the tolerance to $\epsilon=10^{-8}$ and the maximum number of iterations 
 to $n_{\max}=100$.

\begin{algorithm}
  \caption{Bisection method for reduced state dimension determination}
\label{alg:bisection}
  \begin{algorithmic}[1]
    \Require{sorted HSVs of the different SSMs $[\bfSigma_1, \bfSigma_2, \dots, \bfSigma_{\ell}]$, target order $r$, tolerance $\epsilon$, maximum number of iterations $n_{\max}$}
    \Ensure{truncation order for each layer}
    \State{Normalize each HSV vector such that $\sum_{i=1}^n \bfSigma_{j,i} = 1$ for all $\bfSigma_1, \bfSigma_2, \dots, \bfSigma_{\ell}$}
    \State{Set $\gamma_{\min}=0$, $\gamma_{\max}=1$}
    \State{Set $\gamma=(\gamma_{\min}+\gamma_{\max})/2$}
    \State{Set $k=0$}
    \State{Compute $\hat r$ as mean of $[\argmin(\bfSigma_1>\gamma), \argmin(\bfSigma_2>\gamma), \dots, \argmin(\bfSigma_\ell>\gamma)] $}
    \While{$|\hat r-r|>\epsilon$ and $k<n_{\max}$}
    \State{Set $\gamma=(\gamma_{\min}+\gamma_{\max})/2$}
    \State{Set $k=k+1$}
    \If{$\hat r > r$}
      \State{Set $\gamma_{\max} = \gamma$}
    \Else
      \State{Set $\gamma_{\min} = \gamma$}
    \EndIf
    \EndWhile
    \State{Return reduced orders $[\argmin(\bfSigma_1>\gamma), \argmin(\bfSigma_2>\gamma), \dots, \argmin(\bfSigma_\ell>\gamma)] $}
  \end{algorithmic}
\end{algorithm}

\section{Extra Results}

\label{sec:appdx:extra_results}

Table~\ref{tab:runtimes} evaluates the computational cost of adding our regularizer to the SSM training procedure. When adding our regularizer with our block-wise Lyapunov-solver, the training-time only increases slightly, while the naive Lyapunov-solver leads to prohibitively high training-times. We report the increase in train time relative to unregularized training. 
In Figure~\ref{fig:full_compare_with_pruning}, we compare our approach to all methods proposed in~\cite{GwakMKP2024Layer-Adaptive}.

\changed{In Table~\ref{tab:inference_runtime}, we demonstrate the runtime speed up that is obtained during inference at different truncation ratios. 
}

\begin{table}
\centering
\caption{Inference runtime ratios (sCIFAR) at different truncation ratios}
    \begin{tabular}{r|ccccc}
    trunc. ratio & 50\% & 60\% & 70\% & 80\% & 90\% \\ \midrule
    runtime ratio & 0.66 & 0.57 & 0.51 & 0.45 & 0.40
    \end{tabular}
    \label{tab:inference_runtime}
\end{table}

\begin{table}
    \centering
    \caption{Relative runtimes measured over one training epoch. We report ``$-$'' when training is impossible due to excessive resource consumption. Remember that the costs of naive Lyapunov solver scale as $\mathcal{O}(n^6)$  Training time is measured in and extra run for one epoch to ensure the same hardware is used. Experiments are carried out on a single H100 GPU.}
    \begin{tabular}{c|rrr}
    regularizer & sCIFAR (grayscale) & sMNIST & IMDB\\
    \toprule
        none &  $1\times$ & $1\times$ & $1\times$\\
        HSVR (blocked) & $1.59\times$ & $1.12\times$ & $1.15\times$ \\
        HSVR (naive) & $-$ & $27.3\times$ & $-$
    \end{tabular}
    \label{tab:runtimes}
\end{table}

In Table~\ref{tab:test_accuracies_median}, we report median and standard deviation of the test accuracies across three different training runs, in which the models are initialized with different random seeds. Note that, importantly, for truncation ratios, where the accuracy of the original model is retained (until around 80\% for sCIFAR (grayscale) and sMNIST), the standard deviation is low, and it only increases after that threshold. This is because after losing approximation accuracy of the original SSM layers, the sequence-to-sequence maps change in different ways across the different runs, which has a different impact on the test accuracy.

\begin{table}[t]
\centering
\caption{Median and standard deviation of test accuracies [\%] for HSVR.}
\label{tab:test_accuracies_median}
\resizebox{\textwidth}{!}{
\begin{tabular}{r|cccc|cccc|cccc}
\toprule
quantity & \multicolumn{4}{c|}{sCIFAR (grayscale)} & \multicolumn{4}{c|}{sMNIST} & \multicolumn{4}{c}{IMDB} \\
 trunc. ratio& 60\% & 70\% & 80\% & 90\%
 & 60\% & 70\% & 80\% & 90\%
 & 60\% & 70\% & 80\% & 90\%\\
\midrule
median & 
81.53 & 81.74 &  80.28&  27.72 & 99.28 & 99.26 & 98.95 & 60.58 & 87.32 & 87.47 & 87.40 & 85.62 \\
std. dev.
&
0.20 &  0.43 &  1.22 &  2.95 &  0.02 &  0.06 &  0.22 &  16.75 &  0.13 &  0.13 &  0.15 &  0.87\\
\bottomrule
\end{tabular}
}
\end{table}

\changed{We also compare our HSVR approach to a simple $\ell_1$-norm regularization of the diagonal blocks to justify the computational overhead incurred when computing the Hankel singular values during optimization. We compare our result to a fine parameter sweep for the $\ell_1$-norm regularization magnitude on the CIFAR, MNIST, and IMDB datasets in Tables~\ref{tab:cifar_l1}, \ref{tab:imdb_l1}, and \ref{tab:mnist_l1}, respectively, and observe that for almost all truncation ratios, our approach outperforms the simple $\ell_1$ regularization, which is in line with the system theoretical results for balanced truncation.}

\begin{table}
    \centering
    \begin{tabular}{c|cccccc|c}
trunc. ratio &	$\ell_1, 10^{-6}$ &	$\ell_1, 10^{-5}$ &	$\ell_1, 10^{-4}$ &	$\ell_1, 10^{-3}$ &	$\ell_1, 10^{-2}$ &	$\ell_1, 10^{-1}$ &	HSVR\\
\midrule
50\%  &	40.17 & 	29.18  &	63.02 	 &65.88 	 &78.86 	 &72.53 	 &\textbf{82.19}\\
60\%  &	38.05 & 	28.09  &	57.53 	 &61.34 	 &77.15 	 &69.57 	 &\textbf{81.84}\\
70\%  &	35.35 & 	24.94  &	13.56 	 &55.79 	 &74.84 	 &64.95 	 &\textbf{81.75}\\
80\%  &	12.28 & 	21.92  &	09.95 	 &45.64 	 &69.90 	 &52.67 	 &\textbf{81.37}\\
90\%  &	10.76 & 	09.71  &	10.76 	 &30.80 	 &\textbf{55.54} 	 &31.84 	 &51.08\\
    \end{tabular}
    \caption{CIFAR: Test accuracies (\%) in comparison to L1 regularization}
    \label{tab:cifar_l1}
\end{table}

\begin{table}
    \centering
    \begin{tabular}{c|cccccc|c}
trunc. ratio &	$\ell_1, 10^{-6}$ &	$\ell_1, 10^{-5}$ &	$\ell_1, 10^{-4}$ &	$\ell_1, 10^{-3}$ &	$\ell_1, 10^{-2}$ &	$\ell_1, 10^{-1}$ &	HSVR\\
\midrule
50\% & 	50.68& 	54.75& 	56.17& 	52.27& 	50.02& 	59.16& 	\textbf{87.25}\\
60\% & 	51.39& 	55.15& 	50.93& 	50.94& 	50.02& 	56.61& 	\textbf{87.26}\\
70\% & 	52.67& 	51.04& 	50.02& 	50.78& 	50.57& 	61.97& 	\textbf{87.16}\\
80\% & 	51.74& 	50.99& 	50.06& 	51.66& 	50.16& 	58.72& 	\textbf{86.97}\\
90\% & 	51.23& 	50.14& 	50.14& 	49.60& 	50.22& 	51.94& 	\textbf{86.40}\\
    \end{tabular}
    \caption{IMDB: Test accuracies (\%) in comparison to $\ell_1$ regularization}
    \label{tab:imdb_l1}
\end{table}

\begin{table}
    \centering
    \begin{tabular}{c|cccccc|c}
trunc. ratio &	$\ell_1, 10^{-6}$ &	$\ell_1, 10^{-5}$ &	$\ell_1, 10^{-4}$ &	$\ell_1, 10^{-3}$ &	$\ell_1, 10^{-2}$ &	$\ell_1, 10^{-1}$ &	HSVR\\
\midrule
50\% & 	58.58& 	66.94& 	98.99& 	99.01& 	99.00& 	95.88& 	\textbf{99.29}\\
60\% & 	39.57& 	46.28& 	97.85& 	98.54& 	98.28& 	84.92& 	\textbf{99.45}\\
70\% & 	13.94& 	18.88& 	86.50& 	97.53& 	97.09& 	76.81& 	\textbf{99.22}\\
80\% & 	12.03& 	11.03& 	14.78& 	82.56& 	90.12& 	55.09& 	\textbf{98.90}\\
90\% & 	11.12& 	10.62& 	9.51& 	13.61& 	37.03& 	11.72& 	\textbf{86.95}\\
    \end{tabular}
    \caption{MNIST: Test accuracies (\%) in comparison to $\ell_1$ regularization}
    \label{tab:mnist_l1}
\end{table}

In Figure~\ref{fig:full_compare_with_pruning}, we show a comparison of our method to all methods proposed in~\cite{GwakMKP2024Layer-Adaptive}.

\begin{figure}
    \centering
    \resizebox{\textwidth}{!}{\input{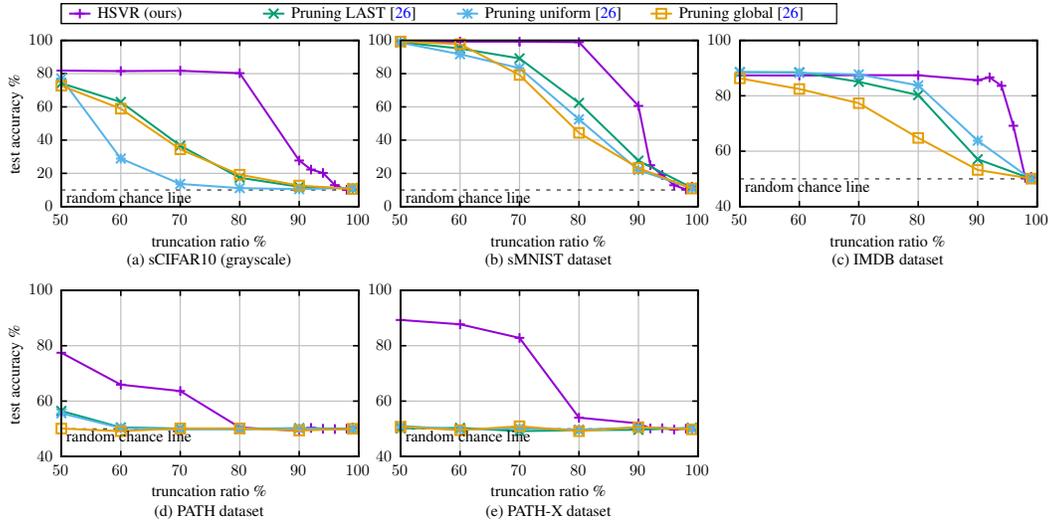}}
    \caption{Comparison of HSVR to all methods in~\cite{GwakMKP2024Layer-Adaptive}}
    \label{fig:full_compare_with_pruning}
\end{figure}

\newpage

\clearpage
\section*{NeurIPS Paper Checklist}

\begin{enumerate}

\item {\bf Claims}
    \item[] Question: Do the main claims made in the abstract and introduction accurately reflect the paper's contributions and scope?
    \item[] Answer: \answerYes{} %
    \item[] Justification: In the abstract we claim that regularizing the Hankel singular value distribution leads to more compressible models, which we motivate theoretically by considering the role of Hankel singular values in model compression and we demonstrate experimental results that support our claims. In particular the statement: Up to $10\times$ more compressible models is found in Table~\ref{tab:test_accuracies}, in which at a truncation ratio of $80\%$ our HSVR model retains an accuracy of close to 100\%, whereas the unregularized truncated model has an accuracy of around 10\%.
    \item[] Guidelines:
    \begin{itemize}
        \item The answer NA means that the abstract and introduction do not include the claims made in the paper.
        \item The abstract and/or introduction should clearly state the claims made, including the contributions made in the paper and important assumptions and limitations. A No or NA answer to this question will not be perceived well by the reviewers. 
        \item The claims made should match theoretical and experimental results, and reflect how much the results can be expected to generalize to other settings. 
        \item It is fine to include aspirational goals as motivation as long as it is clear that these goals are not attained by the paper. 
    \end{itemize}

\item {\bf Limitations}
    \item[] Question: Does the paper discuss the limitations of the work performed by the authors?
    \item[] Answer: \answerYes{} %
    \item[] Justification: We added the limitations in Section~4.
    \item[] Guidelines:
    \begin{itemize}
        \item The answer NA means that the paper has no limitation while the answer No means that the paper has limitations, but those are not discussed in the paper. 
        \item The authors are encouraged to create a separate "Limitations" section in their paper.
        \item The paper should point out any strong assumptions and how robust the results are to violations of these assumptions (e.g., independence assumptions, noiseless settings, model well-specification, asymptotic approximations only holding locally). The authors should reflect on how these assumptions might be violated in practice and what the implications would be.
        \item The authors should reflect on the scope of the claims made, e.g., if the approach was only tested on a few datasets or with a few runs. In general, empirical results often depend on implicit assumptions, which should be articulated.
        \item The authors should reflect on the factors that influence the performance of the approach. For example, a facial recognition algorithm may perform poorly when image resolution is low or images are taken in low lighting. Or a speech-to-text system might not be used reliably to provide closed captions for online lectures because it fails to handle technical jargon.
        \item The authors should discuss the computational efficiency of the proposed algorithms and how they scale with dataset size.
        \item If applicable, the authors should discuss possible limitations of their approach to address problems of privacy and fairness.
        \item While the authors might fear that complete honesty about limitations might be used by reviewers as grounds for rejection, a worse outcome might be that reviewers discover limitations that aren't acknowledged in the paper. The authors should use their best judgment and recognize that individual actions in favor of transparency play an important role in developing norms that preserve the integrity of the community. Reviewers will be specifically instructed to not penalize honesty concerning limitations.
    \end{itemize}

\item {\bf Theory Assumptions and Proofs}
    \item[] Question: For each theoretical result, does the paper provide the full set of assumptions and a complete (and correct) proof?
    \item[] Answer: \answerYes{} %
    \item[] Justification: We provide proofs to Propositions 1 and 2 in the Appendix.
    \item[] Guidelines:
    \begin{itemize}
        \item The answer NA means that the paper does not include theoretical results. 
        \item All the theorems, formulas, and proofs in the paper should be numbered and cross-referenced.
        \item All assumptions should be clearly stated or referenced in the statement of any theorems.
        \item The proofs can either appear in the main paper or the supplemental material, but if they appear in the supplemental material, the authors are encouraged to provide a short proof sketch to provide intuition. 
        \item Inversely, any informal proof provided in the core of the paper should be complemented by formal proofs provided in appendix or supplemental material.
        \item Theorems and Lemmas that the proof relies upon should be properly referenced. 
    \end{itemize}

    \item {\bf Experimental Result Reproducibility}
    \item[] Question: Does the paper fully disclose all the information needed to reproduce the main experimental results of the paper to the extent that it affects the main claims and/or conclusions of the paper (regardless of whether the code and data are provided or not)?
    \item[] Answer: \answerYes{} %
    \item[] Justification: We clearly describe the experimental setup in the Appendix.
    \item[] Guidelines:
    \begin{itemize}
        \item The answer NA means that the paper does not include experiments.
        \item If the paper includes experiments, a No answer to this question will not be perceived well by the reviewers: Making the paper reproducible is important, regardless of whether the code and data are provided or not.
        \item If the contribution is a dataset and/or model, the authors should describe the steps taken to make their results reproducible or verifiable. 
        \item Depending on the contribution, reproducibility can be accomplished in various ways. For example, if the contribution is a novel architecture, describing the architecture fully might suffice, or if the contribution is a specific model and empirical evaluation, it may be necessary to either make it possible for others to replicate the model with the same dataset, or provide access to the model. In general. releasing code and data is often one good way to accomplish this, but reproducibility can also be provided via detailed instructions for how to replicate the results, access to a hosted model (e.g., in the case of a large language model), releasing of a model checkpoint, or other means that are appropriate to the research performed.
        \item While NeurIPS does not require releasing code, the conference does require all submissions to provide some reasonable avenue for reproducibility, which may depend on the nature of the contribution. For example
        \begin{enumerate}
            \item If the contribution is primarily a new algorithm, the paper should make it clear how to reproduce that algorithm.
            \item If the contribution is primarily a new model architecture, the paper should describe the architecture clearly and fully.
            \item If the contribution is a new model (e.g., a large language model), then there should either be a way to access this model for reproducing the results or a way to reproduce the model (e.g., with an open-source dataset or instructions for how to construct the dataset).
            \item We recognize that reproducibility may be tricky in some cases, in which case authors are welcome to describe the particular way they provide for reproducibility. In the case of closed-source models, it may be that access to the model is limited in some way (e.g., to registered users), but it should be possible for other researchers to have some path to reproducing or verifying the results.
        \end{enumerate}
    \end{itemize}

\item {\bf Open access to data and code}
    \item[] Question: Does the paper provide open access to the data and code, with sufficient instructions to faithfully reproduce the main experimental results, as described in supplemental material?
    \item[] Answer: \answerYes{} %
    \item[] Justification: Our jax implementation is available at \url{www.github.com/Algopaul/hankelreg}.
    \item[] Guidelines:
    \begin{itemize}
        \item The answer NA means that paper does not include experiments requiring code.
        \item Please see the NeurIPS code and data submission guidelines (\url{https://nips.cc/public/guides/CodeSubmissionPolicy}) for more details.
        \item While we encourage the release of code and data, we understand that this might not be possible, so “No” is an acceptable answer. Papers cannot be rejected simply for not including code, unless this is central to the contribution (e.g., for a new open-source benchmark).
        \item The instructions should contain the exact command and environment needed to run to reproduce the results. See the NeurIPS code and data submission guidelines (\url{https://nips.cc/public/guides/CodeSubmissionPolicy}) for more details.
        \item The authors should provide instructions on data access and preparation, including how to access the raw data, preprocessed data, intermediate data, and generated data, etc.
        \item The authors should provide scripts to reproduce all experimental results for the new proposed method and baselines. If only a subset of experiments are reproducible, they should state which ones are omitted from the script and why.
        \item At submission time, to preserve anonymity, the authors should release anonymized versions (if applicable).
        \item Providing as much information as possible in supplemental material (appended to the paper) is recommended, but including URLs to data and code is permitted.
    \end{itemize}

\item {\bf Experimental Setting/Details}
    \item[] Question: Does the paper specify all the training and test details (e.g., data splits, hyperparameters, how they were chosen, type of optimizer, etc.) necessary to understand the results?
    \item[] Answer: \answerYes{} %
    \item[] Justification: We specify training details in Section~\ref{sec:appdx:expsetup}.
    \item[] Guidelines:
    \begin{itemize}
        \item The answer NA means that the paper does not include experiments.
        \item The experimental setting should be presented in the core of the paper to a level of detail that is necessary to appreciate the results and make sense of them.
        \item The full details can be provided either with the code, in appendix, or as supplemental material.
    \end{itemize}

\item {\bf Experiment Statistical Significance}
    \item[] Question: Does the paper report error bars suitably and correctly defined or other appropriate information about the statistical significance of the experiments?
    \item[] Answer: \answerYes{} %
    \item[] Justification: For our proposed method (HSVR) we report the median of the test accuracies across three different in the main text and present the standard deviations for each truncation ratio in the Appendix.
    \item[] Guidelines:
    \begin{itemize}
        \item The answer NA means that the paper does not include experiments.
        \item The authors should answer "Yes" if the results are accompanied by error bars, confidence intervals, or statistical significance tests, at least for the experiments that support the main claims of the paper.
        \item The factors of variability that the error bars are capturing should be clearly stated (for example, train/test split, initialization, random drawing of some parameter, or overall run with given experimental conditions).
        \item The method for calculating the error bars should be explained (closed form formula, call to a library function, bootstrap, etc.)
        \item The assumptions made should be given (e.g., Normally distributed errors).
        \item It should be clear whether the error bar is the standard deviation or the standard error of the mean.
        \item It is OK to report 1-sigma error bars, but one should state it. The authors should preferably report a 2-sigma error bar than state that they have a 96\% CI, if the hypothesis of Normality of errors is not verified.
        \item For asymmetric distributions, the authors should be careful not to show in tables or figures symmetric error bars that would yield results that are out of range (e.g. negative error rates).
        \item If error bars are reported in tables or plots, The authors should explain in the text how they were calculated and reference the corresponding figures or tables in the text.
    \end{itemize}

\item {\bf Experiments Compute Resources}
    \item[] Question: For each experiment, does the paper provide sufficient information on the computer resources (type of compute workers, memory, time of execution) needed to reproduce the experiments?
    \item[] Answer: \answerYes{} %
    \item[] Justification: We provide an explanation of our hardware setup and execution times in Section~\ref{sec:appdx:expsetup}.
    \item[] Guidelines:
    \begin{itemize}
        \item The answer NA means that the paper does not include experiments.
        \item The paper should indicate the type of compute workers CPU or GPU, internal cluster, or cloud provider, including relevant memory and storage.
        \item The paper should provide the amount of compute required for each of the individual experimental runs as well as estimate the total compute. 
        \item The paper should disclose whether the full research project required more compute than the experiments reported in the paper (e.g., preliminary or failed experiments that didn't make it into the paper). 
    \end{itemize}
    
\item {\bf Code Of Ethics}
    \item[] Question: Does the research conducted in the paper conform, in every respect, with the NeurIPS Code of Ethics \url{https://neurips.cc/public/EthicsGuidelines}?
    \item[] Answer: \answerYes{} %
    \item[] Justification: We have reviewed the Code of Ethics and have ensured that the research conducted in the paper conforms with it in every aspect.
    \item[] Guidelines:
    \begin{itemize}
        \item The answer NA means that the authors have not reviewed the NeurIPS Code of Ethics.
        \item If the authors answer No, they should explain the special circumstances that require a deviation from the Code of Ethics.
        \item The authors should make sure to preserve anonymity (e.g., if there is a special consideration due to laws or regulations in their jurisdiction).
    \end{itemize}

\item {\bf Broader Impacts}
    \item[] Question: Does the paper discuss both potential positive societal impacts and negative societal impacts of the work performed?
    \item[] Answer: \answerYes{} %
    \item[] Justification: We have stated in our conclusion that we are not expecting negative societal impacts that are specific to our compression approach.
    \item[] Guidelines:
    \begin{itemize}
        \item The answer NA means that there is no societal impact of the work performed.
        \item If the authors answer NA or No, they should explain why their work has no societal impact or why the paper does not address societal impact.
        \item Examples of negative societal impacts include potential malicious or unintended uses (e.g., disinformation, generating fake profiles, surveillance), fairness considerations (e.g., deployment of technologies that could make decisions that unfairly impact specific groups), privacy considerations, and security considerations.
        \item The conference expects that many papers will be foundational research and not tied to particular applications, let alone deployments. However, if there is a direct path to any negative applications, the authors should point it out. For example, it is legitimate to point out that an improvement in the quality of generative models could be used to generate deepfakes for disinformation. On the other hand, it is not needed to point out that a generic algorithm for optimizing neural networks could enable people to train models that generate Deepfakes faster.
        \item The authors should consider possible harms that could arise when the technology is being used as intended and functioning correctly, harms that could arise when the technology is being used as intended but gives incorrect results, and harms following from (intentional or unintentional) misuse of the technology.
        \item If there are negative societal impacts, the authors could also discuss possible mitigation strategies (e.g., gated release of models, providing defenses in addition to attacks, mechanisms for monitoring misuse, mechanisms to monitor how a system learns from feedback over time, improving the efficiency and accessibility of ML).
    \end{itemize}
    
\item {\bf Safeguards}
    \item[] Question: Does the paper describe safeguards that have been put in place for responsible release of data or models that have a high risk for misuse (e.g., pretrained language models, image generators, or scraped datasets)?
    \item[] Answer: \answerNA{} %
    \item[] Justification: We do not anticipate such risks in our work.
    \item[] Guidelines:
    \begin{itemize}
        \item The answer NA means that the paper poses no such risks.
        \item Released models that have a high risk for misuse or dual-use should be released with necessary safeguards to allow for controlled use of the model, for example by requiring that users adhere to usage guidelines or restrictions to access the model or implementing safety filters. 
        \item Datasets that have been scraped from the Internet could pose safety risks. The authors should describe how they avoided releasing unsafe images.
        \item We recognize that providing effective safeguards is challenging, and many papers do not require this, but we encourage authors to take this into account and make a best faith effort.
    \end{itemize}

\item {\bf Licenses for existing assets}
    \item[] Question: Are the creators or original owners of assets (e.g., code, data, models), used in the paper, properly credited and are the license and terms of use explicitly mentioned and properly respected?
    \item[] Answer: \answerYes{} %
    \item[] Justification:  We properly credit the datasets used. 
    \item[] Guidelines:
    \begin{itemize}
        \item The answer NA means that the paper does not use existing assets.
        \item The authors should cite the original paper that produced the code package or dataset.
        \item The authors should state which version of the asset is used and, if possible, include a URL.
        \item The name of the license (e.g., CC-BY 4.0) should be included for each asset.
        \item For scraped data from a particular source (e.g., website), the copyright and terms of service of that source should be provided.
        \item If assets are released, the license, copyright information, and terms of use in the package should be provided. For popular datasets, \url{paperswithcode.com/datasets} has curated licenses for some datasets. Their licensing guide can help determine the license of a dataset.
        \item For existing datasets that are re-packaged, both the original license and the license of the derived asset (if it has changed) should be provided.
        \item If this information is not available online, the authors are encouraged to reach out to the asset's creators.
    \end{itemize}

\item {\bf New Assets}
    \item[] Question: Are new assets introduced in the paper well documented and is the documentation provided alongside the assets?
    \item[] Answer: \answerYes{} %
    \item[] Justification: We clearly explain the implementation details of our layer and regularizer and a jax implementation is available at \url{www.github.com/Algopaul/hankelreg}.
    \item[] Guidelines:
    \begin{itemize}
        \item The answer NA means that the paper does not release new assets.
        \item Researchers should communicate the details of the dataset/code/model as part of their submissions via structured templates. This includes details about training, license, limitations, etc. 
        \item The paper should discuss whether and how consent was obtained from people whose asset is used.
        \item At submission time, remember to anonymize your assets (if applicable). You can either create an anonymized URL or include an anonymized zip file.
    \end{itemize}

\item {\bf Crowdsourcing and Research with Human Subjects}
    \item[] Question: For crowdsourcing experiments and research with human subjects, does the paper include the full text of instructions given to participants and screenshots, if applicable, as well as details about compensation (if any)? 
    \item[] Answer: \answerNA{} %
    \item[] Justification: This paper does not involve crowdsourcing nor research with human subjects.
    \item[] Guidelines:
    \begin{itemize}
        \item The answer NA means that the paper does not involve crowdsourcing nor research with human subjects.
        \item Including this information in the supplemental material is fine, but if the main contribution of the paper involves human subjects, then as much detail as possible should be included in the main paper. 
        \item According to the NeurIPS Code of Ethics, workers involved in data collection, curation, or other labor should be paid at least the minimum wage in the country of the data collector. 
    \end{itemize}

\item {\bf Institutional Review Board (IRB) Approvals or Equivalent for Research with Human Subjects}
    \item[] Question: Does the paper describe potential risks incurred by study participants, whether such risks were disclosed to the subjects, and whether Institutional Review Board (IRB) approvals (or an equivalent approval/review based on the requirements of your country or institution) were obtained?
    \item[] Answer: \answerNA{} %
    \item[] Justification: The paper does not involve crowdsourcing nor research with human subjects.
    \item[] Guidelines:
    \begin{itemize}
        \item The answer NA means that the paper does not involve crowdsourcing nor research with human subjects.
        \item Depending on the country in which research is conducted, IRB approval (or equivalent) may be required for any human subjects research. If you obtained IRB approval, you should clearly state this in the paper. 
        \item We recognize that the procedures for this may vary significantly between institutions and locations, and we expect authors to adhere to the NeurIPS Code of Ethics and the guidelines for their institution. 
        \item For initial submissions, do not include any information that would break anonymity (if applicable), such as the institution conducting the review.
    \end{itemize}
    \item {\bf Declaration of LLM usage}
    \item[] Question: Does the paper describe the usage of LLMs if it is an important, original, or non-standard component of the core methods in this research? Note that if the LLM is used only for writing, editing, or formatting purposes and does not impact the core methodology, scientific rigorousness, or originality of the research, declaration is not required.
    \item[] Answer: \answerNA{} %
    \item[] Justification: The method does not involve LLMs as any components.
    \item[] Guidelines:
    \begin{itemize}
        \item The answer NA means that the core method development in this research does not involve LLMs as any important, original, or non-standard components.
        \item Please refer to our LLM policy (\url{https://neurips.cc/Conferences/2025/LLM}) for what should or should not be described.
    \end{itemize}

\end{enumerate}

\end{document}